\newtheorem{theorem}{Theorem}[section]
\newtheorem{lemma}[theorem]{Lemma}
\newtheorem{claim}[theorem]{Claim}
\newtheorem{prop}{Proposition}[section]
\newcommand{\D}{\mathcal{D}}
\renewcommand{\P}{\mathcal{P}}
\newcommand{\N}{\mathcal{N}}
\newcommand{\X}{\mathcal{X}}
\newcommand{\E}{\mathbb{E}}
\newcommand{\R}{\mathbb{R}}
\newcommand{\spn}{\mathrm{span}}
\newcommand{\nul}{\mathrm{null}}
\newcommand{\rank}{\mathrm{rank}}
\newcommand{\poly}{\mathrm{poly}}
\newcommand{\polylog}{\mathrm{polylog}}
\newcommand{\dist}{\mathrm{dist}}
\newcommand{\convh}{\mathrm{CH}}
\renewcommand{\vec}{\mathbf}
\newcommand{\hvec}[1]{\hat{\vec #1}}
\newcommand{\near}{\epsilon'/(8 k \sizeA)}
\newcommand{\sizeA}{\alpha}
\newcommand{\pg}{\gamma}
\renewcommand{\parallel}{{\mkern3mu\vphantom{\bot}\vrule depth 0pt\mkern2mu\vrule depth 0pt\mkern3mu}}
\title{Generalized Topic Modeling}
\author{Avrim Blum\thanks{Supported in part by National Science Foundation grants CCF-1525971 and CCF-1535967.} \\ \small{avrim@cs.cmu.edu} \and
Nika Haghtalab\thanks{Supported in part by National Science Foundation grant CCF-1525971 and by a Microsoft Research Graduate Fellowship and an IBM Ph.D Fellowship.}\\ \small{nhaghtal@cs.cmu.edu} 
}
\begin{document}

\maketitle

\begin{abstract}
Recently there has been significant activity in developing algorithms with provable guarantees for topic modeling.
In standard topic models, a topic (such as sports, business, or politics) is viewed as a probability distribution $\vec a_i$ over words, and a document is generated by first selecting a mixture $\vec w$ over topics, and then generating words i.i.d. from the associated mixture $A\vec w$.  Given a large collection of such documents, the goal is to recover the topic vectors and then to correctly classify new documents according to their topic mixture.  

In this work we consider a broad generalization of this framework in which words are no longer assumed to be drawn i.i.d. and instead a topic is a complex distribution over sequences of paragraphs. Since one could not hope to even represent such a distribution in general (even if paragraphs are given using some natural feature representation), we aim instead to directly learn a document classifier. That is, we aim to learn a predictor that given a new document, accurately predicts its topic mixture, without learning the distributions explicitly.  We present several natural conditions under which one can do this  efficiently and discuss issues such as noise tolerance and sample complexity in this model. More generally, our model can be viewed as a generalization of the multi-view or co-training setting in machine learning.
\end{abstract}

\setcounter{page}{0}
\thispagestyle{empty}
\newpage
\section{Introduction}

Topic modeling is an area with significant recent work in the intersection of algorithms and machine learning \citep{arora2012computing,arora2012learning,arora2013practical,anandkumar2012spectral,anandkumar2014tensor, bansal2014provable}.  In topic modeling, a topic (such as sports, business, or politics) is modeled as a probability distribution over words, expressed as a vector $\vec a_i$.  A document is generated by first selecting a mixture $\vec w$ over topics, such as 80\% sports and 20\% business, and then choosing words i.i.d. from the associated mixture distribution, which in this case would be $0.8 \vec a_{sports} + 0.2 \vec a_{business}$.  Given a large collection of such documents (and some assumptions about the distributions $\vec a_i$ as well as the distribution over mixture vectors $\vec w$) the goal is to recover the topic vectors $\vec a_i$ and then to use the $\vec a_i$ to correctly classify new documents according to their topic mixtures.

Algorithms for this problem have been developed with strong provable guarantees even when documents consist of only two or three words each \cite{arora2012learning,anandkumar2012spectral,papadimitriou1998latent}.  In addition, algorithms based on this problem formulation perform well empirically on standard datasets \cite{blei2003latent,hofmann1999probabilistic}.

As a theoretical model for document generation, however, an obvious problem with the standard topic modeling framework is that documents are not actually created by independently drawing words from some distribution.  Better would be a model in which {\em sentences} are drawn i.i.d. from a distribution over sentences (this would at least produce grammatical objects and allow for meaningful correlation among related words within a topic, like {\sf shooting} a {\sf free throw} or {\sf kicking} a {\sf field goal}). Even better would be  {\em paragraphs} drawn i.i.d. from a distribution over paragraphs (this would at least produce coherent paragraphs).  Or, even better, how about a model in which paragraphs are drawn non-independently, so that the second paragraph in a document can depend on what the first paragraph was saying, though presumably with some amount of additional entropy as well?  This is the type of model we study here.

Note that an immediate problem with considering such a model is that now the task of learning an explicit distribution (over sentences or paragraphs) is hopeless.  While a distribution over words can be reasonably viewed as a probability vector, one could not hope to learn or even represent an explicit distribution over sentences or paragraphs.  Indeed, except in cases of plagiarism, one would not expect to see the same paragraph twice in the entire corpus. Moreover, this is likely to be true even if we assume paragraphs have some natural feature-vector representation.
Instead, we bypass this issue by aiming to directly learn a predictor for documents---that is, a function that given a document, predicts its mixture over topics---without explicitly learning topic distributions.  Another way to think of this is that our goal is not to learn a model that could be used to {\em write} a new document, but instead just a model that could be used to {\em classify} a document written by others.  This is much as in standard supervised learning where algorithms such as SVMs learn a decision boundary (such as a linear separator) for making predictions on the labels of examples without explicitly learning the distributions $D_+$ and $D_-$ over positive and negative examples respectively.  However, our setting is {\em un}supervised (we are not given labeled data containing the correct classifications of the documents in the training set) and furthermore, rather than each data item belonging to one of the $k$ classes (topics), each data item belongs to a {\em mixture} of the $k$ topics.  Our goal is given a new data item to output what that mixture is.

We begin by describing our high level theoretical formulation.  This formulation can be viewed as a generalization both of standard topic modeling and of a setting known as {\em multi-view learning} or {\em co-training} \cite{blum1998combining,dasgupta2002pac,SSL10,balcan2004co,sun13}.  We then describe several natural assumptions under which we can indeed efficiently solve the problem, learning accurate topic mixture predictors.

\section{Preliminaries} \label{sec:model}

We assume that paragraphs are described by $n$ real-valued features and so can be viewed as points $\vec x$ in an instance space $\X \subseteq \R^n$.
We assume that each document consists of at least two paragraphs and denote it  by $(\vec x^1, \vec x^2)$. 
Furthermore, we consider $k$  topics and partial membership functions $f_1, \dots, f_k: \X \rightarrow[0,1]$, such that $f_i(\vec x)$ determines the degree to which paragraph  $\vec x$ belongs to topic $i$, and, $\sum_{i=1}^k f_i(\vec x) = 1$.
For any vector of probabilities $\vec w \in \R^k$ --- which we sometimes refer to as mixture weights --- we define $\X^{\vec w} = \{ \vec x\in \R^n \mid \forall i,~ f_i(\vec x) = w_i\}$ to be the set of all paragraphs with partial membership values $\vec w$.
We assume that both paragraphs of a document have the same partial membership values, that is $(\vec x^1, \vec x^2) \in \bigcup_{\vec w} \X^{\vec w}\times \X^{\vec w}$, although we also allow some noise later on.  To better relate to the literature on multi-view learning, we will also refer to topics as ``classes'' and refer to paragraphs as ``views'' of the document.

Much like the standard topic models, we consider an unlabeled sample set that is generated by a two-step process. First, we consider a distribution $\P$ over vectors of mixture weights and draw $\vec w$ according to $\P$.
Then we consider distribution $\D^{\vec w}$ over the set $\X^{\vec w} \times \X^{\vec w}$ and draw a document $(\vec x^1, \vec x^2)$ according to $\D^{\vec w}$.
We consider two settings. In the first setting, which is addressed in Section~\ref{sec:no-noise}, the learner receives the instance $(\vec x^1, \vec x^2)$.
In the second setting, the learner receives samples $(\hvec x^1, \hvec x^2)$ that have been perturbed by some noise. We discuss two noise models in Sections~\ref{sec:noise} and \ref{sec:agnostic}.
In both cases, the goal of the learner is to recover the partial membership functions $f_i$.

More specifically, in this work we consider partial membership functions  of the form 
$f_i(\vec x) = f(\vec v_i \cdot \vec x)$, where  $\vec v_1, \dots, \vec v_k \in \R^n$ are linearly independent  and $f:\R \rightarrow [0,1]$  is a monotonic function.
For the majority of this work, we consider $f$ to be the identity function, so that $f_i(\vec x) = \vec v_i \cdot \vec x$.
Define $\vec a_i \in \spn\{\vec v_1, \dots, \vec v_k\}$ such that  $\vec v_i \cdot  \vec a_i=1$ and $\vec v_j \cdot  \vec  a_i = 0$ for all $j\neq i$. That is, $\vec a_i$ can be viewed as the projection of a paragraph that is purely of topic $i$ onto the span of $\vec v_1, \dots, \vec v_k$.  Define $\Delta = \convh(\{\vec a_1, \dots, \vec a_k\})$ to be the convex hull of $\vec a_1, \dots, \vec a_k$.

Throughout this work, we use $\|\cdot \|_2$ to denote the spectral norm of a matrix or the $L_2$ norm of a vector. When  it is clear from the context, we simply use $\|\cdot \|$ to denote these quantities. 
We denote by $B_r(\vec x)$  the ball of radius $r$ around  $\vec x$.  
For any matrix $M$, we use $M^+$ to denote the pseudoinverse of $M$.

\subsection*{Generalization of Standard Topic Modeling}
Let us briefly discuss how the above model is a generalization of the standard topic modeling framework. 
In the standard framework, a topic is modeled as a probability distribution over $n$ words, expressed as a vector $\vec a_i\in [0,1]^n$, where $a_{ij}$ is the probability of word $j$ in topic $i$. 
A document is generated by first selecting a mixture $\vec w\in[0,1]^k$ over $k$ topics, and then choosing words i.i.d. from the associated mixture distribution $\sum_{i=1}^k w_i \vec a_i$.  
The document vector $\hvec x$ is then the vector of word counts, normalized by dividing by the number of words in the document so that the $L_1$ norm of $\hvec x$ is 1.

As a thought experiment, consider infinitely long documents. In the standard framework, all infinitely long documents of a mixture weight $\vec w$ have the same representation $\vec  x = \sum_{i=1}^k w_i \vec a_i$. 
This representation implies $\vec x \cdot \vec v_i = w_i$ for all $i\in [k]$, where $V = (\vec v_1, \dots, \vec v_k)$ is the pseudo-inverse of matrix $A =  (\vec a_1, \dots, \vec a_k)$.
Thus, by partitioning the document into two halves (views) $\vec x^1$ and $\vec x^2$, 
our \emph{noise-free model} with $f_i(\vec x)  =\vec v_i \cdot \vec x$ generalizes the standard topic model for long documents.
However, our model is substantially more general: 
features within a view can be arbitrarily  correlated, the views themselves can be  correlated with each other, and even in the zero-noise case, documents of the same mixture can look very different so long as they have the same projection to the span of the $\vec a_1, \dots, \vec a_k$.

For a shorter document $\hvec x$, each feature $\hat x_i$ is drawn according to a distribution with mean $x_i$, where  $\vec x  = \sum_{i=1}^k w_i \vec a_i$.  Therefore, $\hvec x$ can be thought of as a noisy measurement of $\vec x$. The fewer the words in a document, the larger is the noise in  $\hvec x$. Existing work in topic modeling, such as~\cite{arora2012learning,anandkumar2014tensor}, provide elegant procedures for handling large noise that is caused by drawing only $2$ or $3$ words according to the distribution induced by $\vec x$. As we show in Section~\ref{sec:noise}, our method can also tolerate large amounts of noise under some conditions. 
While our method cannot deal with documents that are only $2$- or $3$-words long, the benefit is a model that is much more general in many other respects.

\section{An Easier Case with Simplifying Assumptions} \label{sec:no-noise}
We make two main simplifying assumptions in this section, both of which will be relaxed in Section~\ref{sec:noise}: 1) The documents are not noisy, i.e., $\vec x^1\cdot \vec v_i  = \vec x^2\cdot\vec v_i$; 2) There is 
non-negligible probability density on instances that belong purely to one class. 
In this section we demonstrate ideas and techniques, which we will develop further in the next section, to learn the topic vectors from a corpus of unlabeled documents.

\medskip
\noindent\textbf{The Setting:}~
We make the following assumptions. 
The documents are not noisy, that is for any document $(\vec x^1, \vec x^2)$ and for all $i\in[k]$,  $\vec x^1\cdot \vec v_i  = \vec x^2\cdot\vec v_i$.
Regarding distribution $\P$, we assume that a non-negligible probability density is assigned to pure samples for each class. More formally, for some $\xi > 0$, for all $i\in[k]$, $\Pr_{\vec w \sim \P}[ \vec w = \vec e_i] \geq \xi$. 
Regarding distribution $\D^{\vec w}$, we allow the two paragraphs in a document, i.e., the two views $(\vec x^1, \vec x^2)$ drawn from $\D^{\vec w}$, to be correlated as long as 
for any subspace $Z \subset \nul\{ \vec v_1 \dots, \vec v_k \}$ of dimension strictly less than $n-k$, $\Pr_{(\vec x^1, \vec x^2)\sim \D^{\vec w}} [(\vec x^1 - \vec x^2) \not\in Z] \geq \zeta$ for some non-negligible $\zeta$.
One way to view this in the context of topic modeling is that if, say, ``sports'' is a topic, then it should not be the case that the second paragraph always talks about the exact same sport as the first paragraph; else ``sports'' would really be a union of several separate but closely-related topics.  Thus, while we do not require independence we do require some non-correlation between the paragraphs.

\medskip
\noindent\textbf{Algorithm and Analysis:}~
The main idea behind our approach is to use the consistency of the two views of the samples to first recover the subspace spanned by $\vec v_1, \dots, \vec v_k$ (Phase 1). Once this subspace is recovered, we show that a projection of a sample on this space corresponds to the convex combination of class vectors using the appropriate mixture weight that was used for that sample. Therefore, we find vectors $\vec a_1, \dots, \vec a_k$ that purely belong to each class by taking the extreme points of the projected samples (Phase 2). The class vectors $\vec v_1, \dots, \vec v_k$ are the unique vectors (up to permutations) that classify $\vec a_1, \dots, \vec a_k$ as pure samples.  Phase 2 is similar to that of \cite{arora2012learning}.
Algorithm~\ref{alg:noisefree} formalizes the details of this approach.

\begin{algorithm}
\caption {\textsc{Algorithm for Generalized Topic Models ---  No noise}} 
\label{alg:noisefree}
\textbf{Input:} A sample set $S = \{ (\vec x_i^1 , \vec x_i^2) \mid i\in[m] \}$ such that for each $i$, first a vector $\vec w$ is drawn from $\P$ and then $(\vec x_i^1, \vec x_i^2)$ is drawn from $\D^{\vec w}$.\\
\textbf{Phase 1:}
\begin{enumerate}
\item Let $X^1$ and $X^2$ be matrices where the  $i^{th}$ column is $\vec x^1_i$ and $\vec x^2_i$, respectively.
\item Let $P$ be the projection matrix on the last $k$ left singular vectors of $(X^1 - X^2)$.
\end{enumerate}
\textbf{Phase 2:}
\begin{enumerate}
\item Let $S_\parallel = \{P\vec x_i^j \mid i\in[m], j\in\{1,2\} \}$.
\item Let $A$ be a matrix whose columns are the extreme points of the convex hull of $S_\parallel$. (This can be found using farthest traversal or linear programming.)
\end{enumerate}
\textbf{Output:} Return columns of $A^+$ as $\vec v_1, \dots, \vec v_k$.
\end{algorithm}

In Phase $1$ for recovering  $\spn\{\vec v_1, \dots, \vec v_k\}$, note that for any sample $(\vec x^1, \vec x^2)$ drawn from $\D^{\vec w}$, we have that $\vec v_i\cdot \vec x^1 = \vec v_i\cdot \vec x^2= w_i$. Therefore, regardless of what $\vec w$ was used to produce the sample, we have that $\vec v_i\cdot (\vec x^1 - \vec x^2) = 0$ for all $i\in [k]$.
That is, $\vec v_1, \dots, \vec v_k$ are in the null-space of all such $(\vec x^1 - \vec x^2)$. So, if  samples $(\vec x_i^1 - \vec x_i^2)$ span a $n-k$ dimensional subspace, then $\spn\{\vec v_1, \dots, \vec v_k\}$  can be recovered by taking $\nul\{ (\vec x^1 - \vec  x^2) \mid (\vec x^1, \vec x^2) \in \X^{\vec w} \times \X^{\vec w},~ \forall \vec w\in \R^k\}$.
Using singular value decomposition, this null space is spanned by the last  $k$ singular vectors of $X^1 - X^2$, where $X^1$ and $X^2$ are matrices with columns $\vec x_i^1$ and $\vec x_i^2$, respectively.

This is where the assumptions on $\D^{\vec w}$ come into play. By assumption, for any strict subspace $Z$ of $\spn\{ (\vec x^1 - \vec  x^2) \mid (\vec x^1, \vec x^2) \in \X^{\vec w} \times \X^{\vec w},~ \forall \vec w\in \R^k\}$, 
$D^{\vec w}$ has non-negligible probability on instances $(\vec x^1 - \vec x^2) \notin Z$. Therefore, after seeing sufficiently many samples  we can recover the space of all $(\vec x^1 - \vec  x^2)$. The next lemma, whose proof appears in Appendix~\ref{app:no-noise-rank}, formalizes this discussion.

\begin{lemma}\label{lem:rank}
Let $Z = \spn \{(\vec x^1_i - \vec x^2_i)\mid i \in[m]\}$. Then,  $m = O( \frac {n-k}{\zeta} \log(\frac 1 \delta))$ is sufficient such that with probability $1-\delta$, $\rank(Z) = n-k$.
\end{lemma}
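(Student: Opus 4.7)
The plan is to track the rank of the running span as documents are added one at a time and reduce the lemma to a concentration inequality for an adaptive Bernoulli sequence. First observe that for any document $(\vec x^1,\vec x^2)$, no matter which mixture $\vec w$ was drawn, we have $\vec v_i\cdot(\vec x^1 - \vec x^2) = w_i - w_i = 0$ for all $i\in[k]$. Hence every difference lies in $\nul\{\vec v_1,\ldots,\vec v_k\}$, an $(n-k)$-dimensional subspace, and therefore $\rank(Z)\leq n-k$ always. The lemma asks us to show equality is attained with probability $\geq 1-\delta$ after $m$ samples.

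Let $Z_j=\spn\{\vec x^1_i-\vec x^2_i:i\leq j\}$ and $r_j=\rank(Z_j)$. Whenever $r_{j-1}<n-k$, the subspace $Z_{j-1}$ is a strict subspace of $\nul\{\vec v_1,\ldots,\vec v_k\}$, so the non-degeneracy hypothesis on $\D^{\vec w}$, applied to the (freshly drawn) $\vec w$ for sample $j$ and then integrated by the tower rule, gives $\Pr[r_j>r_{j-1}\mid\text{past}]\geq\zeta$. To obtain a clean concentration statement, define an indicator $Y_j=\ind[r_j>r_{j-1}]$ when $r_{j-1}<n-k$, and otherwise set $Y_j$ to be an independent $\mathrm{Bernoulli}(\zeta)$ draw. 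By construction $\Pr[Y_j=1\mid Y_1,\ldots,Y_{j-1}]\geq\zeta$. The key observation is that $\sum_{j=1}^m Y_j\geq n-k$ forces $r_m=n-k$: otherwise every $Y_j$ would count a genuine rank increase and their sum would equal $r_m<n-k$, contradicting the hypothesis.

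By coupling each $Y_j$ to an independent $\mathrm{Bernoulli}(\zeta)$ via the standard inverse-CDF trick, $\sum_j Y_j$ stochastically dominates $\mathrm{Binomial}(m,\zeta)$, so a Chernoff bound yields $\Pr[\sum_j Y_j<m\zeta/2]\leq e^{-m\zeta/8}$. Choosing $m=\Theta\bigl((n-k+\log(1/\delta))/\zeta\bigr)=O\bigl(\tfrac{n-k}{\zeta}\log\tfrac{1}{\delta}\bigr)$ simultaneously makes $m\zeta/2\geq n-k$ and $e^{-m\zeta/8}\leq\delta$, completing the argument. The only subtle point is the adaptivity—the probability that the next difference enlarges the span depends on the random subspace built from prior samples—but the fresh-coin trick above decouples this cleanly and makes the standard Chernoff/coupling argument go through without further care.
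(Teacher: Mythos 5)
Your proof is correct, but it takes a genuinely different route from the paper's. The paper partitions the $m$ samples into $n-k$ batches of size $\frac{1}{\zeta}\ln\frac{n}{\delta}$ each, argues inductively that each batch fails to increase the rank of the running span with probability at most $(1-\zeta)^{\frac{1}{\zeta}\ln\frac{n}{\delta}}\leq\delta/n$, and union-bounds over the $n-k$ batches. You instead process one sample at a time, observe that conditioned on the history the rank increases with probability at least $\zeta$ (which is exactly what the non-correlation assumption on $\D^{\vec w}$ gives, since the running span is a strict subspace of $\nul\{\vec v_1,\dots,\vec v_k\}$ and the fresh $\vec w$ and document are independent of the past), and then use padding with fresh coins plus stochastic domination by a $\mathrm{Binomial}(m,\zeta)$ and a single Chernoff bound. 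Your handling of the adaptivity is sound, and the reduction ``$\sum_j Y_j\geq n-k$ forces full rank'' is exactly right. The payoff of your argument is a strictly better sample bound, $m=O\bigl((n-k+\log(1/\delta))/\zeta\bigr)$, additive rather than multiplicative in the two terms; notably, the paper's own proof actually requires $\frac{n-k}{\zeta}\ln\frac{n}{\delta}$ samples (a $\log n$ factor that does not appear in the lemma statement), so your argument not only matches but cleanly justifies the stated $O(\frac{n-k}{\zeta}\log\frac{1}{\delta})$ bound, whereas the paper's batch-and-union-bound argument is simpler to state but slightly lossier.
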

 \begin{wrapfigure}[14]{r}{0.25\textwidth}
 \vspace{-1cm}
  \begin{center}
    \includegraphics[width=0.2\textwidth]{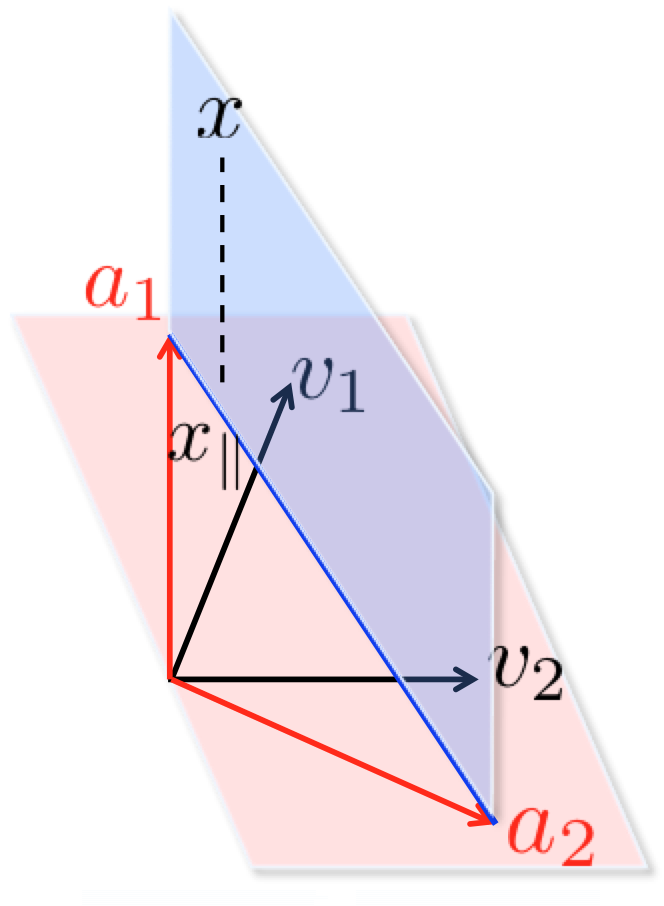}
     \vspace{-0.5cm}
  \end{center}
\caption{\small $\vec v_1, \vec v_2$ correspond to class $1$ and $2$, and $\vec a_1$ and $\vec a_2$ correspond to canonical vectors that are purely of class $1$ and $2$, respectively.}
\label{fig:w-and-u}
\end{wrapfigure}
Using Lemma~\ref{lem:rank}, Phase 1 of Algorithm~\ref{alg:noisefree} recovers $\spn\{\vec v_1, \dots, \vec v_k\}$. Next, we show that pure samples are the extreme points of the convex hull of all samples when projected on the subspace $\spn \{\vec v_1, \dots, \vec v_k\}$. 
Figure~\ref{fig:w-and-u} demonstrates the relation between the class vectors, $\vec v_i$, projection of samples, and the projection of pure samples $\vec a_i$. The next lemma,
 whose proof appears in Appendix~\ref{app:no-noise-sum-alpha-u}, formalizes this claim.

\begin{lemma} \label{lem:sum-alpha-u}
For any $\vec x$, let $\vec x_\parallel$ represent the projection of $\vec x$ on $\spn\{\vec v_1, \dots, \vec v_k\}$. Then, $x_\parallel = \sum_{i\in[k]}(\vec v_i \cdot \vec x)  \vec a_i.$
\end{lemma}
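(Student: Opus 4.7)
The plan is to expand $\vec x_\parallel$ in the basis $\{\vec a_1, \dots, \vec a_k\}$ and read off its coefficients using the dual-basis relation $\vec v_j \cdot \vec a_i = \delta_{ij}$ that defines the $\vec a_i$.

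First I would verify that $\{\vec a_1, \dots, \vec a_k\}$ is a basis of $\spn\{\vec v_1, \dots, \vec v_k\}$. By construction each $\vec a_i$ lies in this $k$-dimensional span, so it suffices to show linear independence: if $\sum_i c_i \vec a_i = 0$, then taking the inner product with $\vec v_j$ yields $c_j = 0$ for every $j$ by the dual-basis property. Since $\vec x_\parallel$ is the orthogonal projection of $\vec x$ onto $\spn\{\vec v_1, \dots, \vec v_k\}$, we can therefore write $\vec x_\parallel = \sum_{i=1}^k c_i \vec a_i$ for some unique scalars $c_i$.

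Next I would identify the coefficients. Taking the inner product of both sides with $\vec v_j$ and using $\vec v_j \cdot \vec a_i = \delta_{ij}$ gives $\vec v_j \cdot \vec x_\parallel = c_j$. Finally, since $\vec x - \vec x_\parallel$ lies in the orthogonal complement of $\spn\{\vec v_1, \dots, \vec v_k\}$, we have $\vec v_j \cdot (\vec x - \vec x_\parallel) = 0$, hence $c_j = \vec v_j \cdot \vec x_\parallel = \vec v_j \cdot \vec x$. Substituting this into the expansion gives $\vec x_\parallel = \sum_{i=1}^k (\vec v_i \cdot \vec x)\, \vec a_i$, as claimed.

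There is no real obstacle here; the proof is essentially a one-line duality computation once one recognizes that $\{\vec a_i\}$ is precisely the basis dual to $\{\vec v_i\}$ inside their common span. The only subtlety worth flagging is that the dual relationship is only guaranteed inside $\spn\{\vec v_1, \dots, \vec v_k\}$, which is exactly why the statement is about $\vec x_\parallel$ rather than $\vec x$; the orthogonal projection step bridges this gap by ensuring that $\vec v_j \cdot \vec x$ and $\vec v_j \cdot \vec x_\parallel$ agree.
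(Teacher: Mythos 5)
Your proof is correct and uses the same ingredients as the paper's: the dual-basis relation $\vec v_j \cdot \vec a_i = \delta_{ij}$, the fact that each $\vec a_i$ lies in $\spn\{\vec v_1,\dots,\vec v_k\}$, and orthogonality of $\vec x - \vec x_\parallel$ to that span. The only difference is direction — the paper verifies that the candidate sum is in the span and has residual orthogonal to every $\vec v_j$, while you expand $\vec x_\parallel$ in the basis $\{\vec a_i\}$ and solve for the coefficients — which is the same one-line computation read backwards.
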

With $\sum_{i\in[k]}(\vec v_i \cdot \vec x) \vec a_i$ representing the projection of $\vec x$ on  $\spn\{\vec v_1, \dots, \vec v_k\}$, it is clear that the extreme points of the set of all projected instances that belong to $\X^{\vec w}$ for all ${\vec w}$ are $\vec a_1, \dots, \vec a_k$. Since in a large enough sample set, with high probability for all $i\in[k]$, there is a pure sample of type $i$, taking the extreme points of the set of projected samples is also $\vec a_1, \dots, \vec a_k$. The following lemma, whose proof appears in Appendix~\ref{app:no-noise-extreme-no-noise}, formalizes this discussion.

\begin{lemma} \label{lem:extreme-no-noise}
Let $m = c(\frac 1\xi \log(\frac k \delta))$ for a large enough constant $c>0$.
Let $P$ be the projection matrix for $\spn\{\vec v_1, \dots, \vec v_k\}$ and $S_\parallel = \{P\vec x_i^j \mid i\in[m], j\in\{1,2\} \}$ be the set of projected samples.
With probability $1-\delta$, 
$\{ \vec a_1, \dots, \vec a_k\}$ is the set of extreme points of $\convh(S_\parallel)$.
\end{lemma}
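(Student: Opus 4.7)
The plan is to combine Lemma~\ref{lem:sum-alpha-u} with a coupon-collector style argument on the pure samples. By Lemma~\ref{lem:sum-alpha-u}, for any document with mixture vector $\vec w$ drawn from $\P$, the projection of either view onto $\spn\{\vec v_1,\dots,\vec v_k\}$ equals $\sum_{i=1}^k (\vec v_i\cdot \vec x)\,\vec a_i = \sum_{i=1}^k w_i \vec a_i$, since on paragraphs $\vec x\in \X^{\vec w}$ we have $\vec v_i\cdot \vec x = w_i$. Because $\vec w$ is a probability vector (entries nonnegative and summing to $1$), every projected sample lies in $\Delta = \convh(\vec a_1,\dots,\vec a_k)$, so $S_\parallel \subseteq \Delta$. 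The vectors $\vec a_1,\dots,\vec a_k$ are linearly (hence affinely) independent because they form the dual basis to the linearly independent $\vec v_1,\dots,\vec v_k$, so $\Delta$ is a $(k-1)$-simplex whose set of extreme points is exactly $\{\vec a_1,\dots,\vec a_k\}$.

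Next, I would show that with high probability the sample set contains a pure sample of every topic. By hypothesis, $\Pr_{\vec w\sim\P}[\vec w=\vec e_i]\ge \xi$ for each $i\in[k]$, so the probability that no sample among the $m$ drawn has $\vec w=\vec e_i$ is at most $(1-\xi)^m\le e^{-\xi m}$. Choosing $m=c(\frac{1}{\xi}\log(\frac{k}{\delta}))$ for a sufficiently large constant $c$ and union bounding over the $k$ topics gives that, with probability at least $1-\delta$, for every $i$ there is some sample with mixture $\vec e_i$. The projection of such a sample is $\sum_j (e_i)_j \vec a_j = \vec a_i$, so $\{\vec a_1,\dots,\vec a_k\}\subseteq S_\parallel$.

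Combining the two parts, $\{\vec a_1,\dots,\vec a_k\}\subseteq S_\parallel \subseteq \Delta = \convh(\vec a_1,\dots,\vec a_k)$, from which it follows that $\convh(S_\parallel) = \Delta$ and that the set of extreme points of $\convh(S_\parallel)$ is precisely $\{\vec a_1,\dots,\vec a_k\}$. There is no real obstacle here: the statement is essentially a coupon-collector bound on the pure-sample event combined with the convex-geometric observation that once all vertices of a simplex appear among a set of points inside the simplex, they remain the only extreme points of that set's convex hull.
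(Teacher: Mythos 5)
Your proof is correct and follows essentially the same route as the paper's: apply Lemma~\ref{lem:sum-alpha-u} to place every projected sample in the simplex $\Delta$, then use the $\Pr_{\vec w\sim\P}[\vec w=\vec e_i]\ge\xi$ assumption with a union bound over the $k$ topics to guarantee a pure sample of each type, so that the vertices $\vec a_1,\dots,\vec a_k$ appear in $S_\parallel$ and are its only extreme points. Your write-up is, if anything, slightly more explicit than the paper's about the affine independence of the $\vec a_i$ and the final sandwich $\{\vec a_1,\dots,\vec a_k\}\subseteq S_\parallel\subseteq\Delta$.
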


Therefore, $\vec a_1, \dots, \vec a_k$ can be learned by taking the extreme points of the convex hull of all samples projected on $\spn(\{\vec v_1, \dots, \vec v_k\})$. Furthermore, $V = A^+$ is unique, therefore $\vec v_1, \dots, \vec v_k$ can be easily found by taking the pseudoinverse of matrix $A$. Together with Lemma~\ref{lem:rank} and \ref{lem:extreme-no-noise} this proves the next theorem regarding learning class vectors in the absence of noise.

\begin{theorem}[No Noise] \label{thm:no-noise}
There is a polynomial time algorithm for which
$ m = O\left( \frac {n-k}{\zeta} \ln(\frac 1 \delta) + \frac 1\xi \ln(\frac k \delta) \right)
$
is sufficient to recover  $\vec v_i$   exactly for all $i\in[k]$, with probability $1-\delta$.
\end{theorem}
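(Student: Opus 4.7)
The plan is to chain Lemmas~\ref{lem:rank}, \ref{lem:sum-alpha-u}, and~\ref{lem:extreme-no-noise} via a union bound and then invert the recovered vertices to obtain the $\vec v_i$. First, with $m_1 = O(\frac{n-k}{\zeta}\ln(\frac{2}{\delta}))$ samples, Lemma~\ref{lem:rank} guarantees that $\rank(X^1 - X^2) = n-k$ with probability at least $1-\delta/2$. Since every draw satisfies $\vec v_i \cdot (\vec x^1 - \vec x^2) = 0$, each $\vec v_i$ lies in the left nullspace of $X^1 - X^2$, which has dimension $k$; a dimension count therefore identifies this left nullspace with $\spn\{\vec v_1, \dots, \vec v_k\}$. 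Hence the projection $P$ onto the last $k$ left singular vectors of $X^1 - X^2$ is exactly the orthogonal projection onto $\spn\{\vec v_1, \dots, \vec v_k\}$.

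Second, I would combine Lemmas~\ref{lem:sum-alpha-u} and~\ref{lem:extreme-no-noise} to analyze Phase~2. For any view $\vec x$ drawn from $\D^{\vec w}$, Lemma~\ref{lem:sum-alpha-u} gives $P\vec x = \sum_{i} (\vec v_i \cdot \vec x)\,\vec a_i = \sum_{i} w_i\,\vec a_i \in \Delta$, and this equals $\vec a_j$ exactly when $\vec w = \vec e_j$. Applying Lemma~\ref{lem:extreme-no-noise} with failure parameter $\delta/2$ on $m_2 = O(\frac{1}{\xi}\ln(\frac{2k}{\delta}))$ samples, with probability at least $1-\delta/2$ the projected sample set $S_\parallel$ contains at least one pure sample of every type, so the extreme points of $\convh(S_\parallel)$ are exactly $\{\vec a_1, \dots, \vec a_k\}$. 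Thus the columns of $A$ produced by Phase~2 are the vectors $\vec a_1, \dots, \vec a_k$ in some (unknown) permutation.

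Third, I would recover the $\vec v_i$ by inversion. Because $\vec v_i \cdot \vec a_j = \ind[i=j]$ and $\vec a_1, \dots, \vec a_k$ are linearly independent, the biorthogonal relation uniquely determines $\vec v_1, \dots, \vec v_k$ inside $\spn\{\vec a_1, \dots, \vec a_k\}$; these are precisely the rows of $A^+$ (up to the same permutation as the columns of $A$), so the algorithm's output is exactly $\{\vec v_1, \dots, \vec v_k\}$. A union bound over the two failure events yields total sample complexity $m = m_1 + m_2 = O\bigl(\frac{n-k}{\zeta}\ln(\frac{1}{\delta}) + \frac{1}{\xi}\ln(\frac{k}{\delta})\bigr)$, and every step (SVD, farthest-traversal or LP for extreme points, pseudoinverse) is polynomial time.

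The main subtlety is the identification in Phase~1 of the SVD-based left nullspace with $\spn\{\vec v_1, \dots, \vec v_k\}$: this requires ruling out any ``accidental'' extra vectors in the left nullspace of $X^1-X^2$, which is exactly what Lemma~\ref{lem:rank}'s rank conclusion provides. Once this is in hand, the remainder of the proof is a straightforward synthesis of the three lemmas with a linear-algebraic inversion step.
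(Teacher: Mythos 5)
Your proposal is correct and follows essentially the same route as the paper: Lemma~\ref{lem:rank} plus a dimension count to identify the left nullspace of $X^1 - X^2$ with $\spn\{\vec v_1,\dots,\vec v_k\}$, Lemmas~\ref{lem:sum-alpha-u} and~\ref{lem:extreme-no-noise} to show the extreme points of the projected samples are exactly $\{\vec a_1,\dots,\vec a_k\}$, and the pseudoinverse $V = A^+$ to recover the class vectors, with a union bound over the two phases. Your write-up actually makes explicit a few steps the paper leaves implicit (the rank/dimension identification and the splitting of $\delta$), but the substance is identical.
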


\section{Relaxing the Assumptions} \label{sec:noise}
In this section, we relax the two main simplifying assumptions from Section~\ref{sec:no-noise}. 
We relax the assumption on non-noisy documents and allow a  large fraction of the documents to not satisfy  $\vec v_i \cdot \vec x^1=  \vec v_i \cdot \vec x^2$. In the standard topic model, this corresponds to having a large fraction of short documents.
Furthermore, we relax the assumption on the existence of pure documents to an assumption on the existence of ``almost-pure'' documents. 
We further develop the approach discussed in the previous section and  introduce  efficient algorithms that approximately recover the topic vectors in this setting.

\noindent\textbf{The Setting:}~
We assume that any sampled document  has a non-negligible probability of being non-noisy and with the remaining probability, 
the two views of the document are perturbed by  additive Gaussian noise, independently. More formally, for a given sample $(\vec x^1, \vec x^2)$, with probability $p_0>0$ the algorithm receives $(\vec x^1, \vec x^2)$ and with the remaining probability $1-p_0$, the algorithm receives 
 $(\hvec{x}^1, \hvec{x}^2)$, such that $\hat{\vec x}^j =  {\vec x}^j  + \vec e^j$, where $\vec e^j \sim \N(\vec 0, \sigma^2 I_n)$.
 
We assume that for each topic the probability that a document is mostly about that topic is non-negligible. More formally, for any topic $i\in[k]$, $\Pr_{\vec w\sim \P} [ \| \vec e_i - \vec w\|_1 \leq \epsilon \|] \geq g(\epsilon)$, where $g$ is a polynomial function of its input. 
A stronger form of this assumption, better known as the \emph{dominant admixture assumption}, assumes that every document is mostly about one topic and has been empirically shown to hold on several real world data sets~\citep{bansal2014provable}.
Furthermore, in the  Latent Dirichlet Allocation model, $\Pr_{\vec w\sim \P} [\max_{i\in[k]}  w_i \geq 1-\epsilon] \geq O(\epsilon^2)$ for typical values of the  concentration parameter.

We also make mild assumptions on the distribution over instances.
We assume that the covariance of the distribution over $(\vec x_i^1 - \vec x_i^2) (\vec x_i^1 - \vec x_i^2)^\top$ is significantly larger than the noise covariance $\sigma^2$. That is, for some $\delta_0>0$, the least significant non-zero eigen value of $\E_{(\vec x_i^1 , \vec x_i^2)}  [ (\vec x_i^1 - \vec x_i^2) (\vec x_i^1 - \vec x_i^2)^\top ]$, equivalently its $(n-k)^{th}$ eigen value,  is greater than $6 \sigma^2 + \delta_0$.
At a high level, these assumptions are necessary, because if $\| \vec x_i^1 - \vec x_i^2 \|$ is too small
compared to $\|\vec x_i^1\|$ and $\| \vec x_i^2\|$, 
 then even a small amount of noise affects the structure present in  $\vec x_i^1 - \vec x_i^2$ completely. Moreover, we assume that the $L_2$ norm of each view of a sample is bounded by some $M>0$.
We also assume that for all $i\in [k]$, $\| \vec a_i  \| \leq \sizeA$ for some $\sizeA>0$. At a high level,  $\| \vec a_i \|$s are inversely proportional to the non-zero singular values of $V = (\vec v_1, \dots, \vec v_k)$. Therefore, $\| \vec a_i\| \leq \sizeA$ implies that the $k$ topic vectors are sufficiently different.

\medskip
\noindent\textbf{Algorithm and Results:}~
Our approach follows the general theme of the previous section:  First, recover 
 $\spn\{\vec v_1, \dots, \vec v_k\}$ and then recover $\vec a_1, \dots, \vec a_k$ by taking the extreme points of the projected samples. 
In this case, in the first phase we recover $\spn\{\vec v_1, \dots, \vec v_k\}$ approximately, by finding a projection matrix $\hat P$ such that $\|P - \hat P\|\leq \epsilon$ for an arbitrarily small $\epsilon$, where $P$ is the projection matrix on $\spn\{\vec v_1, \dots, \vec v_k\}$.
At this point in the algorithm, the projection of samples on $\hat P$ can include points that are arbitrarily far from $\Delta$. This is due to the fact that the noisy samples are  perturbed by $\N(\vec 0, \sigma^2I_n)$, so, for large values of $\sigma$ some noisy samples  map to points that are quite far from $\Delta$. Therefore, we have to detect and remove these samples before continuing to the second phase. 
For this purpose, we show that the low density regions of the projected samples can safely be removed such that the convex hull of the remaining points is close to $\Delta$.
 In the second phase, we consider projections of each sample using $\hat P$.
To approximately recover $\vec a_1, \dots, \vec a_k$, we recover samples, $\vec x$, that are far from the convex hull of the remaining points, when  $\vec x$ and a ball of points close to it are removed. We then show that such points are close to one of the pure class vectors,  $\vec a_i$. Algorithm~\ref{alg:noise-gaussian} and the details of the above approach  and its performance are as follows.

\begin{algorithm}[ht]
\caption {\textsc{Algorithm for Generalized Topic Models ---  With Noise}} 
\label{alg:noise-gaussian}
\textbf{Input:} A sample set $\{ ({\hat{\vec x}}_i^1 , {\hat{\vec x}}_i^2) \mid i \in[m] \}$ such that for each $i$, first a vector $\vec w$ is drawn from $\P$, then $(\vec x_i^1, \vec x_i^2)$ is drawn from $\D^{\vec w}$, then with probability $p_0$, $\hvec x_i^j  = \vec x_i^j$, else with probability $1-p_0$, ${\hat{\vec x}}_i^j = \vec x_i^j + \N(\vec 0, \sigma^2 I_n)$ for $i\in [m]$ and $j\in \{1,2\}$.
\\
\textbf{Phase 1:}
\begin{enumerate}
\item Take $m_1 = \Omega \left(\frac {n-k}{\zeta} \ln(\frac 1 \delta)
+ \frac{n \sigma^4 r^2 M^2}{\delta_0^2 \epsilon^2} \ln(\frac 1 \delta)
+  \frac{n \sigma^2 M^4 r^2}{\delta_0^2 \epsilon^2} \polylog(\frac{nrM}{\epsilon\delta})
+ \frac{M^4}{\delta_0^2} \ln(\frac n \delta)
 \right)$ samples.
\item Let $\hat X^1$ and $\hat X^2$ be matrices where the  $i^{th}$ column is $\hvec x^1_i$ and $\hvec x^2_i$, respectively.
\item Let $\hat P$ be the projection matrix on the last $k$  left singular vectors of $\hat X^1 - \hat X^2$.
\end{enumerate}
\textbf{Denoising Phase:}
\begin{enumerate} \setcounter{enumi}{3}
\item Let  $\epsilon' = \frac{\epsilon}{8 r}$ and $\pg = g\left(\frac {\epsilon'}{8 k\sizeA} \right)$.
\item 
Take $m_2 = \Omega\left( \frac{k}{p_0\pg} \ln \frac1\delta \right)$ fresh samples\footnotemark\ and let $\hat S_\parallel= \left\{\hat P\hvec x_i^1 \mid \forall i\in[m_2] \right\}$.

\item Remove $\hvec x_\parallel$ from $\hat S_\parallel$, for which there are less than $p_0 \pg m_2/2$ points within distance of $\frac{\epsilon'}{2}$  in $\hat S_\parallel$. \label{item:S||}
\end{enumerate}
\textbf{Phase 2:}
\begin{enumerate} \setcounter{enumi}{5}
	\item For all $\hvec x_\parallel$ in $\hat S_\parallel$, if $
\dist(\vec x_\parallel,  \convh(\hat S_\parallel \setminus B_{6r\epsilon'}(\hvec x) ) \geq  2\epsilon'$ add $\hvec x_\parallel$ to $C$.
	\item Cluster $C$ using single linkage with threshold $16r\epsilon'$. Assign any point from cluster $i$ as $\hat {\vec a}_i$. \end{enumerate}
\textbf{Output:}
Return $\hvec a_1, \dots, \hvec a_k$.
\end{algorithm}
\footnotetext{For the denoising step, we use a fresh set of samples that were not used for learning the projection matrix. This guarantees that the noise distribution  in the projected samples remain a Gaussian.}

\begin{theorem}\label{thm:noise-a_i}
Consider any $\epsilon, \delta>0$ such that
$\epsilon \leq  O\left(r \sigma \sqrt{ k} \right)$,
where  $r$ is a parameter that depends on the geometry of the simplex $\vec a_1, \dots, \vec a_k$ and will be defined later.
There is an efficient algorithm for which an  unlabeled sample set of size
\[
m = O\left(\frac {n-k}{\zeta} \ln(\frac 1 \delta)
+ \frac{n \sigma^4 r^2 M^2}{\delta_0^2 \epsilon^2} \ln(\frac 1 \delta)
+  \frac{n \sigma^2 M^4 r^2}{\delta_0^2 \epsilon^2} \polylog(\frac{nrM}{\epsilon\delta})
+ \frac{M^4}{\delta_0^2} \ln(\frac n \delta)  +\frac{k~\ln(1 / \delta)}{p_0 ~g\!\left(\epsilon / (k r \sizeA)\right)} 
 \right)
\]
is sufficient  to recover $\hvec a_i$  such that $\| \hvec a_i - \vec a_i\|_2 \leq \epsilon$  for all $i\in[k]$,  with probability $1-\delta$.
\end{theorem}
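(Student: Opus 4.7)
The plan is to follow the three-phase structure of Algorithm~\ref{alg:noise-gaussian} and turn the clean argument from Section~\ref{sec:no-noise} into a quantitative perturbation analysis. In the noise-free case the last $k$ left singular vectors of $X^1-X^2$ span $\spn\{\vec v_1,\ldots,\vec v_k\}$ exactly. In the noisy setting, $\hat P$ differs from $P$ because of two error sources: (i) finite-sample concentration of $\frac{1}{m_1}(X^1-X^2)(X^1-X^2)^\top$ to its expectation, controlled by matrix Bernstein on rank-one terms of norm $\leq M^2$, which accounts for the $\frac{M^4}{\delta_0^2}\ln(n/\delta)$ summand in $m_1$; and (ii) the Gaussian perturbation of per-entry variance $\leq 2\sigma^2$ added by the noisy columns, whose spectral norm is bounded by $O(\sigma\sqrt{n})$ and whose cross term with the signal is bounded by $O(\sigma M\sqrt{n/m_1})$, which accounts for the $n\sigma^4$ and $n\sigma^2 M^2$ summands. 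Invoking Wedin's theorem with the $(n-k)$th eigenvalue gap of at least $\delta_0$ (guaranteed by the assumption on the covariance spectrum, where the $6\sigma^2$ slack absorbs noise-induced eigenvalue shifts) yields $\|\hat P-P\|\leq \epsilon/\poly(r,M,\sizeA,k)$ with the polynomial chosen to absorb downstream needs.

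With $\hat P$ in hand I would analyze the denoising phase. For a non-noisy sample, $\|\hat P\vec x^1-P\vec x^1\|\leq \|\hat P-P\|\cdot M$ lies well within $\epsilon'/8$ of $\Delta$. For a noisy sample, $\hat P\hvec x^1$ equals $P\vec x^1$ plus a $k$-dimensional isotropic Gaussian of scale $O(\sigma)$ per axis (Gaussianity is preserved here because Phase~1 and the denoising step use disjoint samples, per the footnote). By the almost-pure assumption with parameter $\near$, each ball of radius $\epsilon'/8$ around $\vec a_i$ receives non-noisy mass at least $p_0\pg$, so with $m_2$ fresh samples Chernoff ensures each such ball collects $\geq p_0\pg m_2/2$ points; conversely any ball of radius $\epsilon'/2$ lying more than $2\epsilon'$ from $\Delta$ is hit with probability $\ll p_0\pg$ because the Gaussian noise cloud of scale $O(\sigma\sqrt{k})$ is much more diffuse than a ball of radius $\epsilon'/2$ (this is precisely where the hypothesis $\epsilon\leq O(r\sigma\sqrt{k})$ is used). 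Removing low-density points therefore retains an $O(\epsilon')$-neighborhood of $\Delta$ while preserving a dense cluster near every $\vec a_i$.

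For Phase~2 I would show that the returned set $C$ contains only points within $O(r\epsilon')$ of some $\vec a_i$, and contains at least one such point near every $\vec a_i$. If $\hvec x_\parallel\in B_{c\epsilon'}(\vec a_i)$, then the removal ball $B_{6r\epsilon'}(\hvec x)$ wipes out every surviving near-$\vec a_i$ point, and the remaining survivors lie within $O(\epsilon')$ of $\convh(\{\vec a_j:j\neq i\})$, which by the parameter $r$---taken as the distance from each vertex of $\Delta$ to the opposite face---sits at distance $\geq 2\epsilon'$ from $\hvec x_\parallel$. Conversely, any survivor far from every vertex can be written as a convex combination of other survivors lying outside $B_{6r\epsilon'}(\hvec x)$, and so fails the distance test. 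Because $\|\vec a_i-\vec a_j\|\geq 2r\gg 16r\epsilon'$, single linkage at threshold $16r\epsilon'$ produces exactly $k$ clusters, one per vertex, and any representative satisfies $\|\hvec a_i-\vec a_i\|\leq O(r\epsilon')=O(\epsilon)$ by the choice $\epsilon'=\epsilon/(8r)$.

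The hard part will be Phase~1: establishing the $\|\hat P-P\|$ bound with sharp-enough constants to propagate through the later arguments. This in particular requires splitting $(\hat X^1-\hat X^2)(\hat X^1-\hat X^2)^\top-(X^1-X^2)(X^1-X^2)^\top$ into signal--noise and noise--noise interactions and driving each below $\delta_0$ before applying Wedin, which is what forces the $\epsilon^{-2}$ factors in $m_1$. The denoising argument is conceptually easy but requires a union bound over an $\epsilon'/4$-cover of the projected support so that the density test works uniformly, and the Phase~2 geometry is a routine convex-hull computation once $r$ is identified with the vertex-to-opposite-face distance of $\Delta$.
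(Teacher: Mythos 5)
Your proposal follows essentially the same route as the paper: Phase 1 via the decomposition of $\frac 1m \hat D\hat D^\top$ into signal, cross, and noise terms with matrix Bernstein plus a $\sin\theta$ perturbation bound against the $\delta_0$ eigengap; denoising via the high-density-near-$\vec a_i$ / low-density-far-from-$\Delta$ dichotomy exploiting $\epsilon'\lesssim\sigma\sqrt k$; and Phase 2 via the two-radius convex-hull test and single linkage. The only cosmetic differences are your use of a cover plus union bound where the paper invokes the VC dimension of balls, and your identification of $r$ with the vertex-to-opposite-face distance where the paper defines it as the smallest $r$ with $\dist(\vec a_i,\convh(\Delta\setminus B_{r\epsilon}(\vec a_i)))\geq\epsilon$; neither changes the argument.
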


The proof of Theorem~\ref{thm:noise-a_i} involves the next  three lemmas on the performance of the phases of the above algorithm. We formally state these two lemmas here, but defer their proofs to Sections~\ref{sec:phase1}, \ref{sec:denoise} and \ref{sec:phase2}.

\begin{lemma}[Phase 1]\label{lem:phase1-noise}
For any  $\sigma>0$ and $\epsilon>0$, an unlabeled sample set of size
\[
m= O \left(\frac {n-k}{\zeta} \ln(\frac 1 \delta)
+ \frac{n \sigma^4}{\delta_0^2 \epsilon^2} \ln(\frac 1 \delta)
+  \frac{n \sigma^2 M^2}{\delta_0^2 \epsilon^2} \polylog(\frac{n}{\epsilon\delta})
+ \frac{M^4}{\delta_0^2} \ln(\frac n \delta)
 \right).
\]
 is sufficient, such that  with probability $1-\delta$, Phase 1 of Algorithm~\ref{alg:noise-gaussian} returns a projection matrix $\hat P$, such that $\| P - \hat P\|_2 \leq \epsilon$.

\end{lemma}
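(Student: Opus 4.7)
Let $M = \E[(\vec x^1 - \vec x^2)(\vec x^1 - \vec x^2)^\top] + 2(1-p_0)\sigma^2 I_n$ be the population analog of the empirical matrix $\hat M = \frac{1}{m}(\hat X^1 - \hat X^2)(\hat X^1 - \hat X^2)^\top$. Since $\vec v_i \cdot (\vec x^1 - \vec x^2) = 0$ for every $i$, the first summand has image inside $\nul\{\vec v_1,\dots,\vec v_k\} = \spn\{\vec v_i\}^\perp$ and its smallest nonzero eigenvalue is at least $6\sigma^2 + \delta_0$ by hypothesis. In the orthogonal decomposition $\R^n = \spn\{\vec v_i\}^\perp \oplus \spn\{\vec v_i\}$, the matrix $M$ is therefore block diagonal with its top $n-k$ eigenvalues $\geq 6\sigma^2 + \delta_0 + 2(1-p_0)\sigma^2$ and its bottom $k$ eigenvalues equal to $2(1-p_0)\sigma^2$, yielding an eigengap of at least $6\sigma^2 + \delta_0 \geq \delta_0$. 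Consequently, $P$ is exactly the spectral projector onto the $k$ smallest eigenvalues of $M$, and the task reduces to bounding $\|\hat P - P\| \leq \epsilon$ from a concentration bound of $\hat M$ around $M$.

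The strategy is a block-sensitive Davis-Kahan argument. Decompose
\[
\hat M - M \;=\; \Delta^{\mathrm{SS}} + \Delta^{\mathrm{NN}} + \Delta^{\mathrm{SN}} + (\Delta^{\mathrm{SN}})^\top,
\]
namely the signal--signal, noise--noise and signal--noise-cross sample fluctuations. The critical observation is that because every $\vec x_i^1 - \vec x_i^2$ lies in $\spn\{\vec v_i\}^\perp$, the matrix $\Delta^{\mathrm{SS}}$ is supported entirely in the $V^\perp \times V^\perp$ diagonal block. Thus a refined $\sin\Theta$ estimate (applying Weyl to preserve the gap and Davis-Kahan in the form $\|\hat P - P\| \leq 2\|\Delta_{\text{off-diag}}\|/\mathrm{gap}$) allows $\Delta^{\mathrm{SS}}$ to be as large as $\delta_0/8$, requiring only the remaining three matrices to have total spectral norm $O(\epsilon \delta_0)$.

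It then remains to control each fluctuation by matrix concentration. With $\|\vec x^1 - \vec x^2\| \leq 2M$, matrix Bernstein gives $\|\Delta^{\mathrm{SS}}\| = \tilde O\!\left(M^2\sqrt{\log(n/\delta)/m}\right)$, so the requirement $\leq \delta_0/8$ yields the $\frac{M^4}{\delta_0^2}\ln(n/\delta)$ sample count (the $\epsilon^{-2}$ factor is absent precisely because of the block argument). Standard sample Gaussian-covariance concentration gives $\|\Delta^{\mathrm{NN}}\| = O\!\left(\sigma^2\sqrt{n\ln(1/\delta)/m}\right)$, which at level $\epsilon \delta_0$ yields the $\frac{n\sigma^4}{\delta_0^2\epsilon^2}\ln(1/\delta)$ term. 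A truncation-plus-Bernstein bound on the cross term gives $\|\Delta^{\mathrm{SN}}\| = \tilde O\!\left(\sigma M\sqrt{n/m}\right)$ with $\polylog$ factors, producing the $\frac{n\sigma^2 M^2}{\delta_0^2\epsilon^2}\polylog$ term. Finally, the $\frac{n-k}{\zeta}\ln(1/\delta)$ term is the direct analog of Lemma~\ref{lem:rank} and guarantees that the empirical signal component has full rank $n-k$, so the top eigenspace of $\hat M$ is genuinely $(n-k)$-dimensional. The main obstacle is the block-decoupling step: a direct invocation of Davis-Kahan would demand $\|\Delta^{\mathrm{SS}}\| \leq \epsilon \delta_0$ and thereby introduce a spurious $\epsilon^{-2}$ in the $M^4$ term, so the proof must carefully separate the diagonal-block-only perturbation (which merely shrinks large eigenvalues) from the off-diagonal perturbation (which actually rotates the target eigenspace).
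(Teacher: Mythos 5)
Your proposal is correct and reaches the same four-term sample complexity, but it takes a genuinely different route from the paper on the one step that matters. The paper never compares the empirical matrix to a population matrix: it applies the $\sin\Theta$ theorem between $\frac 1m \hat D\hat D^\top - 2\sigma^2 I_n$ and the \emph{empirical} clean Gram matrix $\frac 1m DD^\top$, so the perturbation in the Davis--Kahan numerator is only $(\frac 1m EE^\top - 2\sigma^2 I_n) + (\frac 1m DE^\top + \frac 1m ED^\top)$ --- the signal--signal fluctuation never appears there at all. Concentration of $\frac 1m DD^\top$ around its expectation (Claim~\ref{claim:DD-estimate}) is invoked only to certify the eigenvalue lower bound $\lambda_{n-k}(\frac 1m\hat D\hat D^\top) > 4\sigma^2 + \delta_0/2$ needed in the denominator, and hence only at accuracy $\delta_0/4$; this is how the paper avoids the spurious $\epsilon^{-2}$ on the $M^4$ term, at the price of needing Lemma~\ref{lem:rank} to identify $\spn\{\vec v_1,\dots,\vec v_k\}$ with the bottom-$k$ eigenspace of the empirical $\frac 1m DD^\top$. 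You instead compare to the population matrix and observe that $\frac 1m DD^\top - \E[\frac 1m DD^\top]$ is supported entirely on the $V^\perp\times V^\perp$ block, so it preserves the invariant subspaces and (by Weyl) only shrinks the gap by $\|\Delta^{\mathrm{SS}}\|\le\delta_0/8$ before you apply Davis--Kahan to the remaining off-diagonal perturbation --- a clean two-step argument that achieves the same decoupling and, as you note, makes the $\frac{n-k}{\zeta}$ rank term logically unnecessary (the assumption $\lambda_{n-k}(\E[\vec d\vec d^\top])\ge 6\sigma^2+\delta_0$ already pins down the population eigenspace). Two cosmetic points: you overload the symbol $M$ for both the population matrix and the norm bound on $\|\vec x^j\|$, and your $2(1-p_0)\sigma^2$ noise covariance is the honest mixture value, whereas the paper simply assumes all samples are noisy for this phase (footnote 2); neither affects correctness.
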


\begin{lemma}[Denoising]\label{lem:phase-denoise}
Let $\epsilon' \leq  \frac 13\sigma \sqrt{k}$,
$\|P - \hat P \| \leq \epsilon'/8 M$, and $\pg = g\left(\frac {\epsilon'}{8 k\sizeA} \right)$.
An unlabeled sample size of $m = O\left( \frac{k}{p_0 \pg} \ln(\frac1\delta) \right)$ is sufficient such that for
$\hat S_\parallel$ defined in Step~\ref{item:S||} of Algorithm~\ref{alg:noise-gaussian} the following holds with probability $1-\delta$: For any $\vec x \in \hat S_\parallel$, $\dist(\vec x, \Delta)\leq \epsilon'$, and, for all $i\in [k]$, there exists $\hvec a_i \in \hat S_\parallel$ such that $\| \hvec a_i - \vec a_i\|\leq \epsilon'$.
\end{lemma}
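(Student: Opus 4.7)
The plan is to prove both parts of the lemma by contrasting the dense clustering of near-pure non-noisy samples in the projected space against the sparse spreading of the noisy ones. The setup used throughout: by Lemma~\ref{lem:sum-alpha-u}, $P\vec x=\sum_j w_j\vec a_j\in\Delta$ for any clean sample $\vec x$, so $\|P-\hat P\|\le \epsilon'/(8M)$ together with $\|\vec x\|\le M$ gives $\|\hat P\vec x-P\vec x\|\le \epsilon'/8$; hence every non-noisy projected sample lies within $\epsilon'/8$ of $\Delta$. For a noisy sample, $\hat P(\vec x+\vec e)$ differs from the clean projection by $\hat P\vec e$, which is an isotropic $k$-dimensional Gaussian of variance $\sigma^2$ in the range of $\hat P$.

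For the vertex-coverage claim (for each $i$, some surviving point is within $\epsilon'$ of $\vec a_i$), the mixture-weight assumption yields $\Pr_{\vec w\sim\P}[\|\vec w-\vec e_i\|_1\le \epsilon'/(8k\sizeA)]\ge \pg$, and for a non-noisy sample with such $\vec w$ we get $\|\hat P\vec x-\vec a_i\|\le \sizeA\|\vec w-\vec e_i\|_1+\epsilon'/8\le \epsilon'/4$ by Lemma~\ref{lem:sum-alpha-u} and the triangle inequality. Thus each draw is an ``almost-pure-$i$'' non-noisy sample with probability $\ge p_0\pg$; with $m=\Omega((k/(p_0\pg))\ln(1/\delta))$ samples, a Chernoff plus union bound over $i\in[k]$ gives at least $p_0\pg m/2$ such samples per topic. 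Any two of them lie within $\epsilon'/2$ of each other, so each passes the denoising threshold and can be taken as $\hat{\vec a}_i$ at distance $\le \epsilon'/4$ from $\vec a_i$.

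For the ``no far survivor'' claim, suppose some retained $\hat{\vec x}_\parallel$ had $\dist(\hat{\vec x}_\parallel,\Delta)>\epsilon'$. Any neighbor within $\epsilon'/2$ then has distance $>\epsilon'/2>\epsilon'/8$ from $\Delta$, so cannot be a non-noisy sample; all $\ge p_0\pg m/2$ neighbors must be noisy. However, the probability that a single noisy sample's projection falls into any fixed ball of radius $\epsilon'/2$ is bounded by the maximum density of $\N(\vec 0,\sigma^2 I_k)$ times the ball volume: using Stirling and $\epsilon'\le \sigma\sqrt{k}/3$, this is $O((\sqrt{e}/6)^k)$ up to $\poly(k)$ factors, well below $p_0\pg/2$ in the regime of interest. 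Covering the bounded ``bad'' region of the projected space --- of radius $M+O(\sigma\sqrt{k\log(1/\delta)})$ via Gaussian tail bounds on $\|\hat P\vec e\|$ --- with an $\epsilon'/4$-net and applying Chernoff at each net point rules out any $\epsilon'/2$-ball centered at distance $>\epsilon'$ from $\Delta$ containing $\ge p_0\pg m/2$ samples, contradicting the survival of $\hat{\vec x}_\parallel$.

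The main technical obstacle is precisely this last step: the ball around a surviving sample is random, so Chernoff cannot be applied to a single pre-specified ball. The covering-net argument resolves this, and the net size (polynomial in $M/\epsilon'$, $\sigma\sqrt{k}/\epsilon'$, and $\log(1/\delta)$) together with the per-ball Gaussian estimate yields the stated sample complexity $O((k/(p_0\pg))\ln(1/\delta))$.
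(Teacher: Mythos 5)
Your proposal is correct and follows the same two-pronged structure as the paper's proof: (i) near-pure non-noisy samples concentrate at density $\ge p_0\pg$ in $B_{\epsilon'/4}(\vec a_i)$ (your vertex-coverage argument is essentially identical to the paper's Claim~\ref{claim:high-dense}, including the $\sizeA\|\vec w-\vec e_i\|_1+\epsilon'/8$ bound), while (ii) any $\epsilon'/2$-ball centered $\epsilon'$-far from $\Delta$ contains only noisy projections and hence has density $<p_0\pg/4$. The one place you genuinely diverge is in how you pass from population densities to empirical counts over the \emph{data-dependent} balls: you build an $\epsilon'/4$-covering net of the region where projections can land and union-bound Chernoff over net points, whereas the paper invokes uniform convergence over the class of all balls in $\R^k$ via its VC dimension (Claim~\ref{claim:vc}), which handles the random centers in one stroke and yields $m=O(\frac{k}{p_0\pg}\ln\frac1\delta)$ directly; your net has cardinality exponential in $k$ in the radius ratio, so the union bound actually costs an extra additive $k\ln\!\big((M+\sigma\sqrt{k\ln(1/\delta)})/\epsilon'\big)$ inside the logarithm, slightly worse than the stated bound (and you should enlarge the per-net-point ball to radius $3\epsilon'/4$ and note that its noisy-only character is preserved because the relevant neighbors are all at distance $>\epsilon'/2$ from $\Delta$). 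Your Gaussian small-ball estimate (max density times volume, giving $(\sqrt e/6)^k$ under $\epsilon'\le\sigma\sqrt k/3$) replaces the paper's chi-squared lower-tail bound; both require the same kind of implicit assumption that $k$ is large enough relative to $\ln(1/(p_0\pg))$ for the bound to drop below $p_0\pg/4$, so this is not a gap relative to the paper. Net effect: your route is sound, marginally more hands-on, and loses only logarithmic factors; the VC argument is the cleaner way to uniformize over random ball centers.
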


\begin{lemma}[Phase 2]\label{lem:phase2-noise}
Let $\hat S_\parallel$ be a set of points for which the conclusion of Lemma~\ref{lem:phase-denoise} holds with the value of $\epsilon' = \epsilon / 8r$.
Then, Phase 2 of Algorithm~\ref{alg:noise-gaussian} returns $\hvec a_1, \dots, \hvec a_k$ such that for all $i\in [k]$, $\| \vec a_i - \hvec a_i \| \leq \epsilon$. 
\end{lemma}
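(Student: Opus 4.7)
The plan is to establish that Phase~2 selects into $C$ exactly those points of $\hat S_\parallel$ that lie close to some vertex of $\Delta$, and that the subsequent single-linkage step cleanly partitions $C$ into $k$ groups — one per vertex. I would rely on two facts from Lemma~\ref{lem:phase-denoise}: every $\hvec x \in \hat S_\parallel$ satisfies $\dist(\hvec x, \Delta) \leq \epsilon'$, and for each $i\in[k]$ there is some $\hvec a_i \in \hat S_\parallel$ with $\|\hvec a_i - \vec a_i\| \leq \epsilon'$. I would also interpret the parameter $r$ geometrically: it controls the worst ratio between diameter and vertex-to-opposite-face distance in $\Delta$, so that inside $\Delta$ one has $d_i(1-\mu_i) \leq \|\vec z - \vec a_i\| \leq D_i(1-\mu_i)$ with $D_i/d_i \leq r$, where $\mu_i$ is the $i$-th barycentric coordinate of $\vec z$.

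First I would verify that every $\hvec x \in C$ lies within $8r\epsilon'$ of some $\vec a_i$, by contrapositive. Suppose $\hvec x \in \hat S_\parallel$ is at distance more than $8r\epsilon'$ from every $\vec a_j$. Then each $\hvec a_j$ lies at distance at least $8r\epsilon' - \epsilon' > 6r\epsilon'$ from $\hvec x$, so $\hvec a_1, \dots, \hvec a_k$ all remain in $\hat S_\parallel \setminus B_{6r\epsilon'}(\hvec x)$. Writing the nearest point in $\Delta$ to $\hvec x$ as $\vec z = \sum_j \mu_j \vec a_j$, the convex combination $\sum_j \mu_j \hvec a_j$ lies in that residual convex hull and is within $\epsilon'$ of $\vec z$, hence within $2\epsilon'$ of $\hvec x$; so the test in Phase~2 fails and $\hvec x \notin C$.

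Next I would show each $\hvec a_i$ is accepted into $C$, i.e.\ that $\dist(\hvec a_i, \convh(\hat S_\parallel \setminus B_{6r\epsilon'}(\hvec a_i))) > 2\epsilon'$. Any point $\vec y = \sum_j \lambda_j \vec y_j$ in this hull has each $\vec y_j \in \hat S_\parallel$ at distance more than $6r\epsilon'$ from $\hvec a_i$, hence more than $6r\epsilon' - \epsilon'$ from $\vec a_i$. Projecting to $\Delta$ gives $\vec z_j$ with $\|\vec z_j - \vec a_i\| \geq 6r\epsilon' - 2\epsilon'$, so $(1-\mu_{ji}) \geq (6r\epsilon' - 2\epsilon')/D_i$. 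Convex combinations preserve the bound on the $i$-th barycentric coordinate, so $\vec z = \sum_j \lambda_j \vec z_j \in \Delta$ satisfies $1-\mu_i \geq (6r\epsilon' - 2\epsilon')/D_i$, whence $\|\vec z - \vec a_i\| \geq d_i(1-\mu_i) \geq (6r\epsilon' - 2\epsilon')/r$. For $r\geq 1$ this exceeds $4\epsilon'$, and then $\|\vec y - \hvec a_i\| \geq \|\vec z - \vec a_i\| - \|\vec y - \vec z\| - \|\vec a_i - \hvec a_i\| > 2\epsilon'$ as required. Finally, in the clustering step, two points of $C$ near the same vertex are within $2\cdot 8r\epsilon' = 16r\epsilon'$ of each other, so single linkage merges them; two points near distinct vertices $\vec a_i, \vec a_j$ are at distance at least $\|\vec a_i - \vec a_j\| - 16r\epsilon' \geq d_i - 16r\epsilon'$, which exceeds $16r\epsilon'$ under the hypothesis that $\epsilon$ (and hence $\epsilon'$) is small relative to the vertex separation. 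Thus exactly $k$ clusters form, and any representative of cluster $i$ is within $\epsilon = 8r\epsilon'$ of $\vec a_i$.

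The main obstacle will be the geometric argument in the second step — lower-bounding the distance from $\vec a_i$ to a convex combination of points in $\Delta$ whose individual distances from $\vec a_i$ are controlled. This is where the two-sided barycentric inequality encoded by $r$ is essential, and careful bookkeeping of the $\epsilon'$ slack between $\hat S_\parallel$ and $\Delta$ determines the specific constants in the radius $6r\epsilon'$ used to punch out neighborhoods and the single-linkage threshold $16r\epsilon'$ used to reconstruct the clusters.
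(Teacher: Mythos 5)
Your proof is correct and follows essentially the same route as the paper's: the contrapositive argument that points far from every $\vec a_i$ fail the distance test (via the surrogate $\sum_j \mu_j \hvec a_j$), the argument that a near-vertex representative passes it, and the single-linkage cleanup are exactly the steps of the paper's corresponding claim. The only cosmetic difference is that you characterize $r$ as the worst ratio of vertex diameter $D_i$ to vertex-to-opposite-face distance $d_i$ and re-derive the needed separation explicitly via barycentric coordinates, whereas the paper takes $\dist(\vec a_i, \convh(\Delta \setminus B_{r\epsilon}(\vec a_i))) \geq \epsilon$ as the definition of $r$ and invokes a short appendix claim to transfer that bound from $\Delta$ to $\hat S_\parallel$; since your $r$ upper-bounds the paper's, the statement goes through unchanged.
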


We now prove our main Theorem~\ref{thm:noise-a_i} by directly leveraging the three lemmas  we just stated.
\begin{proof}[Proof of Theorem~\ref{thm:noise-a_i}]
By Lemma~\ref{lem:phase1-noise}, sample set of size  $m_1$ is sufficient such that Phase 1 of  Algorithm~\ref{alg:noise-gaussian} leads to $\|  P - \hat P \| \leq \frac{\epsilon}{32 M r}$, with probability $1-\delta/2$.
Let $\epsilon' = \frac {\epsilon}{8r}$ and take a fresh sample of size $m_2$. 
By Lemma~\ref{lem:phase-denoise}, with probability $1-\delta/2$, for any $\vec x \in \hat S_\parallel$, $\dist(\vec x, \Delta)\leq \epsilon'$, and, for all $i\in [k]$, there exists $\hvec a_i \in \hat S_\parallel$ such that $\| \hvec a_i - \vec a_i\|\leq \epsilon'$. 
Finally, applying Lemma~\ref{lem:phase2-noise} we have that Phase 2 of Algorithm~\ref{alg:noise-gaussian} returns $\hvec a_i$, such that for all $i\in[k]$, $\| \vec a_i - \hvec a_i\| \leq\epsilon$.
\end{proof}
Theorem~\ref{thm:noise-a_i} discusses the approximation  of $\vec a_i$ for all $i\in[k]$. It is not hard to see that such an approximation also translates to the approximation of class vectors, $\vec v_i$ for all $i\in[k]$. That is, using the properties of perturbation of pseudoinverse matrices (see Proposition~\ref{prop:inverseperturb}) one can show that $\| \hat A^{+} - V \| \leq O(\| \hat A - A\| )$. Therefore, $\hat V = \hat A^+$  is a good approximation for $V$.

\subsection{Proof of Lemma~\ref{lem:phase1-noise} --- Phase 1}   \label{sec:phase1}
For $j\in \{1, 2\}$, let $X^j$ and $\hat X^j$ be $n\times m$ matrices with the $i^{th}$ column being  $\vec x^j_i$ and $\hvec x^j_i$, respectively.
As we demonstrated in Lemma~\ref{lem:rank}, with high probability $\rank(X^1 - X^2) = n-k$. 
Note that the nullspace of columns of $X^1  - X^2$ is spanned by the left singular vectors of  $X^1  - X^2$ that correspond to its $k$ zero singular values. Similarly, consider the space spanned by the $k$ least left singular vectors of  $\hat X^1  - \hat X^2$.
We show that the nullspace of columns of $X^1  - X^2$ can be approximated within any desirable accuracy by the space spanned by the $k$ least left singular vectors of $\hat X^1  - \hat X^2$, given a sufficiently large number of samples.

Let $D = X^1 - X^2$ and $\hat D = \hat X^1 - \hat X^2$. 
For ease of exposition, assume that all samples are perturbed by Gaussian noise $\N(\vec 0,\sigma^2 I_n)$.\footnote{The assumption that with a non-negligible probability a sample is non-noisy is not needed for the analysis and correctness of Phase 1 of Algorithm~\ref{alg:noise-gaussian}. This assumption only comes into play in the denoising phase.}
Since each view of a sample is perturbed by an independent draw from a Gaussian noise distribution, we can view $\hat D = D + E$, where each column of $E$ is drawn i.i.d from distribution $\N(\vec 0, 2 \sigma^2 I_n)$. Then,
$\frac 1m \hat D\hat D^\top  = \frac 1m  D D^\top  + \frac 1m D E^\top  + \frac 1m ED^\top  +\frac 1m   EE^\top.$
As a thought experiment, consider this equation in expectation. Since $\E[\frac 1m  EE^\top] = 2 \sigma^2 I_n$ is the covariance matrix of the noise and $\E[DE^\top  + E D^\top ] = 0$, we have
\begin{equation} \label{eq:shrinkage}
   \frac 1m \E \left[ \hat D \hat D^\top \right]  - 2\sigma^2 I_n= \frac 1m\E\left[D D^\top \right].
\end{equation}
Moreover, the eigen vectors and their order are the same in $\frac 1m\E[\hat D \hat D^\top ]$ and $\frac 1m\E[\hat D \hat D^\top ] -2 \sigma^2 I_n$. Therefore, one can recover the nullspace of  $\frac 1m\E[D D^\top]$ by taking the space of the least $k$ eigen vectors of $\frac 1m\E[\hat D \hat D^\top] $.
Next, we show how to recover the nullspace using $\hat D \hat D^\top$, rather than $\E[\hat D \hat D^\top]$.
Assume that the following properties hold:
\begin{enumerate}
\item Equation~\ref{eq:shrinkage} holds not only in expectation, but also with high probability.
That is, with high probability, 
$\| \frac 1m \hat D \hat D^\top   - 2\sigma^2 I_n   - \frac 1m D D^\top \|_2 \leq \epsilon.$
\item With high probability  $\lambda_{n-k} (\frac 1m \hat D \hat D^\top) > 4\sigma^2 + \delta_0 / 2$, where $\lambda_i(\cdot)$ denotes the $i^{th}$ most significant eigen value.
\end{enumerate}
Let $D = U \Sigma V^\top$ and $\hat D = \hat U \hat \Sigma \hat V^\top$ be SVD representations.
We have that  $\frac 1m \hat D \hat D ^\top - 2\sigma^2 I_n = \hat U (\frac 1m \hat \Sigma^2  - 2\sigma^2 I_n)  \hat U^\top$. By property 2,  $\lambda_{n-k}(\frac 1m\hat \Sigma^2) > 4 \sigma^2 +\delta_0/2$. That is, the eigen vectors and their order are the same in $\frac 1m \hat D\hat D^\top  - 2 \sigma^2 I_n$ and $\frac 1m \hat D\hat D^\top  $. As a result the projection matrix, $\hat P$, on the least $k$ eigen vectors of $\frac 1m \hat D \hat D^\top$, is the same as the projection matrix, $Q$, on the least $k$ eigen vectors of $\frac 1m \hat D \hat D^\top  - 2 \sigma^2 I_n$.

Recall that $\hat P$ and $P$ and $Q$ are the  projection matrices on the  least significant $k$ eigen   vectors of $\frac 1m \hat D \hat D^\top$, $\frac 1m D D^\top$,  and $\frac 1m \hat D \hat D^\top- 2\sigma^2I$, respectively. As we discussed, $\hat P = Q$. Now, using the  \cite{davis1970rotation}  or \cite{wedin1972perturbation}  $\sin\theta$ theorem (see Proposition~\ref{prop:kahan}) from
matrix perturbation theory, we have,
\begin{align*}
 \| P - \hat P \|_2 = \| P - Q \| \leq \frac{\|\frac 1m \hat D \hat D^\top - 2\sigma^2 I_n - \frac 1m D D^\top\|_2  }{ \left|  \lambda_{n-k}(\frac 1m \hat D \hat D^\top) - 2\sigma^2 -  \lambda_{n-k+1}(\frac 1m D  D^\top )\right|} \leq  \frac{2\epsilon}{\delta_0}
\end{align*}
where we use Properties 1 and 2 and the fact that $\lambda_{n-k+1}(\frac 1m D  D^\top ) =0$, in the last transition.

\subsubsection{Concentration}
It remains to prove Properties 1 and 2. We briefly describe 
our approach for obtaining concentration results and prove that  when the number of samples $m$ is large enough, with high probability $\| \frac 1m \hat D \hat D^\top   - 2\sigma^2 I_n  - \frac 1m D D^\top \|_2 \leq \epsilon$ and  $\lambda_{n-k} (\frac 1m \hat D \hat D^\top) > 4\sigma^2 + \delta_0 / 2$.

Let us first describe $\frac 1m \hat D \hat D^\top   - 2 \sigma^2 I_n  - \frac 1m D D^\top$ in terms of the error matrices. We have
\begin{equation} \label{eq:hatDexpansion}
\frac 1m \hat D \hat D^\top   - 2\sigma^2 I_n  - \frac 1m D D^\top = \left( \frac 1m E E^\top -2 \sigma^2 I_n \right) + \left( \frac 1m D E^\top + \frac 1m E D^\top \right).
\end{equation}
It suffices to show that for large enough  $m > m_{\epsilon, \delta}$, $\Pr[ \| \frac 1m E E^\top - 2\sigma^2 I_n \|_2 \geq \epsilon] \leq \delta$ and $\Pr[ \|  \frac 1m D E^\top + \frac 1m E D^\top     \|_2 \geq \epsilon] \leq \delta$. In the former, note that $\frac 1m E E^\top$ is the sample covariance of the Gaussian noise matrix and $2\sigma^2 I_n$ is the true covariance matrix of the noise distribution.
The next claim is a direct consequence of the convergence properties of sample covariance of the Gaussian distribution (see Proposition~\ref{prop:covariance-gauss}).

\begin{claim} \label{claim:EE-estimate}
For $m > n \frac{\sigma^4}{\epsilon^2} \log(\frac 1 \delta)$,  with probability $1-\delta$, $ \| \frac 1m E E^\top - 2\sigma^2 I_n \|_2 \leq \epsilon$. \footnote{At first sight, the dependence of this sample complexity on $\sigma$ might appear unintuitive. But, note that even without seeing any samples we can approximate the noise covariance within $2\sigma^2 I_n$. Therefore, if $\epsilon = 2\sigma^2$ our work is done.}
\end{claim}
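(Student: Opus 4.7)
The plan is to treat this as a direct invocation of Proposition~\ref{prop:covariance-gauss} (the concentration result for sample covariance matrices of a Gaussian) applied to the matrix $E$. Since each column of $E$ is drawn i.i.d.\ from $\N(\vec 0, 2\sigma^2 I_n)$, the matrix $\frac{1}{m} E E^\top$ is precisely the empirical (uncentered) covariance matrix of $m$ i.i.d.\ samples from that distribution, and its population counterpart is exactly $2\sigma^2 I_n$. So the target inequality $\|\frac{1}{m} E E^\top - 2\sigma^2 I_n\|_2 \leq \epsilon$ is the standard spectral-norm deviation of the sample covariance from the true covariance.

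The concrete step is to plug the parameters $\Sigma = 2\sigma^2 I_n$ into the statement of Proposition~\ref{prop:covariance-gauss}. The canonical form of such a result bounds $\|\hat\Sigma - \Sigma\|_2$ by $C\|\Sigma\|_2(\sqrt{(n+\log(1/\delta))/m} + (n+\log(1/\delta))/m)$ with probability at least $1-\delta$, so solving for $m$ to make this at most $\epsilon$ (in the regime where the linear term dominates, i.e.\ $\epsilon \lesssim \|\Sigma\|_2 = 2\sigma^2$) yields $m = \Omega(n \|\Sigma\|_2^2 \epsilon^{-2} \log(1/\delta)) = \Omega(n \sigma^4 \epsilon^{-2} \log(1/\delta))$, matching the claim. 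In the complementary regime $\epsilon \geq 2\sigma^2$ the claim is trivial because the triangle inequality applied to $\|\frac{1}{m}EE^\top\|_2$ and $\|2\sigma^2 I_n\|_2$ already gives the bound once the empirical covariance has concentrated to constant multiplicative accuracy, which is the content of the footnote in the excerpt.

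There is essentially no main obstacle here: the claim is flagged as a \emph{direct} consequence of Proposition~\ref{prop:covariance-gauss}, and no additional geometry or structure of the problem is used. The only care required is to verify that the constants and the $\sigma$-dependence in the invoked proposition line up with the stated $m$-bound, and to note (as the footnote does) that when $\epsilon$ is comparable to or larger than $\sigma^2$ the sample complexity is vacuous since $2\sigma^2 I_n$ itself is already an $\epsilon$-approximation of the true covariance without any samples.
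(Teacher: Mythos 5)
Your proposal is correct and matches the paper's argument: the paper likewise proves this claim by a direct application of Proposition~\ref{prop:covariance-gauss} with $\Sigma = 2\sigma^2 I_n$, converting the proposition's relative-error guarantee $\|\Sigma_m - \Sigma\|_2 \le \epsilon\|\Sigma\|_2$ into the stated absolute bound, which is exactly where the $\sigma^4$ factor comes from. Your observation about the trivial regime $\epsilon \gtrsim \sigma^2$ is precisely the content of the paper's footnote, so nothing is missing.
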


We use the Matrix  Bernstein inequality~\citep{tropp2015introduction}, described  in Appendix~\ref{app:spectral}, to demonstrate the   concentration of  $ \| \frac 1m D E^\top + \frac 1m E D^\top\|_2$. The proof of the next Claim is relegated to Appendix~\ref{app:claim:DE-estaimte}.

\begin{claim} \label{claim:DE-estimate}
$m = O(\frac{n \sigma^2 M^2}{\epsilon^2} \polylog\frac{n}{\epsilon\delta})$ is sufficient so that with probability $1-\delta$,  $\left\|  \frac 1m D E^\top + \frac 1m ED^\top   \right\|_2 \leq \epsilon$,
\end{claim}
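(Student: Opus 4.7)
The plan is to apply the matrix Bernstein inequality to the symmetric decomposition
\[
\frac{1}{m}\bigl(D E^\top + E D^\top\bigr) \;=\; \frac{1}{m}\sum_{i=1}^m Y_i, \qquad Y_i := \vec d_i \vec e_i^\top + \vec e_i \vec d_i^\top,
\]
where $\vec d_i := \vec x_i^1 - \vec x_i^2$ and $\vec e_i$ is the $i$-th column of $E$, which under the setup of this section is drawn i.i.d.\ from $\N(\vec 0, 2\sigma^2 I_n)$ independently of $\vec d_i$. Each $Y_i$ is symmetric of rank at most two and $\E[Y_i]=0$ since $\vec e_i$ is zero-mean and independent of $\vec d_i$. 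The two quantities I need to control are a high-probability uniform bound $L$ on $\|Y_i\|_2$ and the matrix variance $v := \bigl\|\sum_i \E[Y_i^2]\bigr\|_2$.

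For the variance, expanding $Y_i^2$ produces four rank-one terms; conditioning on $\vec d_i$ and using the Gaussian moment identities $\E[\vec e_i \vec e_i^\top] = 2\sigma^2 I_n$, $\E[\|\vec e_i\|^2] = 2\sigma^2 n$, and $\E[(\vec e_i^\top \vec d_i)\vec e_i] = 2\sigma^2 \vec d_i$, I obtain
\[
\E[Y_i^2 \mid \vec d_i] \;=\; 2\sigma^2\bigl((n+2)\,\vec d_i \vec d_i^\top \;+\; \|\vec d_i\|^2\, I_n\bigr).
\]
Since $\|\vec d_i\| \le 2M$ by the norm bound on each view, the triangle inequality yields $\bigl\|\E[Y_i^2]\bigr\|_2 \le O(\sigma^2 M^2 n)$ and thus $v \le O(m\sigma^2 M^2 n)$. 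For the uniform bound, $\|Y_i\|_2 \le 2\|\vec d_i\|\,\|\vec e_i\|$, and a standard $\chi^2$ tail estimate with a union bound over $i \in [m]$ gives $\|\vec e_i\|_2 \le O\bigl(\sigma\sqrt{n + \log(m/\delta)}\bigr)$ simultaneously with probability $1-\delta/2$; on this event one can take $L \le O\bigl(M\sigma\sqrt{n\log(m/\delta)}\bigr)$.

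Applying matrix Bernstein to $\sum_i Y_i$ at deviation $m\epsilon$ then gives a failure probability at most $2n\exp\!\bigl(-\frac{(m\epsilon)^2/2}{v + Lm\epsilon/3}\bigr)$. Forcing this to be at most $\delta/2$ and solving for $m$ yields a dominant variance contribution $m \gtrsim n\sigma^2 M^2 \log(n/\delta)/\epsilon^2$; the sub-Gaussian $L$-term contributes only an additional $\polylog\bigl(n/(\epsilon\delta)\bigr)$ factor (since its requirement reduces to $\sigma^2 M^2 n/\epsilon^2 \gtrsim \log(m/\delta)$ after substituting the candidate $m$), matching the claim.

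The main technical nuisance is that $\|Y_i\|_2$ is only bounded with high probability rather than almost surely, so matrix Bernstein must in fact be applied to the truncated variables $Y_i \cdot \mathbf{1}[\|\vec e_i\| \le L_0]$ with $L_0 = O(\sigma\sqrt{n\log(m/\delta)})$. By Gaussian symmetry and tail decay the mean shift induced by truncation is of order $\sigma \cdot e^{-\Omega(L_0^2/\sigma^2)}$, which is negligible compared to $\epsilon$, and the probability that the truncation is ever active over the $m$ samples is absorbed into the $\delta/2$ failure budget. Neither correction affects the final rate.
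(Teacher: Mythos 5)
Your proposal is correct and follows essentially the same route as the paper: both apply the matrix Bernstein inequality to the sum of rank-two terms $\vec d_i \vec e_i^\top + \vec e_i \vec d_i^\top$ (the paper packages each term as a Hermitian dilation, which is equivalent), bound $\|\vec e_i\|$ by $O(\sigma\sqrt{n\log(m/\delta)})$ via a union bound, and obtain a matrix variance of order $m\sigma^2 M^2 n$, yielding the stated $m = O(n\sigma^2 M^2 \epsilon^{-2}\,\polylog)$. Your exact computation of $\E[Y_i^2\mid \vec d_i]$ and your explicit truncation argument for the fact that $\|\vec e_i\|$ is only bounded with high probability are actually more careful than the paper's treatment, which simply conditions on the high-probability event.
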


Next, we prove that $\lambda_{n-k} (\frac 1m \hat D \hat D^\top) > 4\sigma^2 + \delta_0 / 2$. Since for any two matrices, the difference in $\lambda_{n-k}$ can be bounded by the spectral norm of their difference (see Proposition~\ref{prop:diffeigen}), using Equation~\ref{eq:hatDexpansion}, we have
{\small
\begin{align*}
\left| \lambda_{n-k}\left(\frac 1m \hat D \hat D^\top\right)   - \lambda_{n-k}\left(\frac 1m D  D^\top\right)   \right|
 \leq \left\| 2 \sigma^2 I + \left( \frac 1m E E^\top - 2\sigma^2 I_n \right) - \left( \frac 1m D E^\top + \frac 1m E D^\top\right) \right\| 
 \leq 2 \sigma^2 + \frac{ \delta_0}{4},
\end{align*}}where in the last transition we use Claims~\ref{claim:EE-estimate} and \ref{claim:DE-estimate} with the value of $\delta_0/8$ to bound the last two terms  by a total of $\delta_0/4$.
Since $\lambda_{n-k}(\E[\frac 1m D  D^\top]) \geq 6 \sigma^2 + \delta_0$, it is sufficient to show that $|\lambda_{n-k}(\E[\frac 1m D  D^\top]) - \lambda_{n-k}([\frac 1m D  D^\top]) | \leq \delta_0/4$. Similarly as before, this is bounded by $\|  \frac 1m D  D^\top - \E[\frac 1m D  D^\top] \|$. We use the Matrix Bernstein inequality (Proposition~\ref{prop:bernstein}) to prove this concentration result.   The rigorous proof of this claim appears in Appendix~\ref{app:estimateDD},
\begin{claim}\label{claim:DD-estimate}
$m = O\left( \frac{M^4}{\delta_0^2} \log \frac n \delta \right)$ is sufficient so that with probability $1-\delta$,
 $\left\|  \frac 1m D D^\top  -  \E\left[ \frac 1m D D^\top  \right] \right\|_2 \leq \frac {\delta_0}{4}$.
\end{claim}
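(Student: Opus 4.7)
The plan is to view $\frac{1}{m} D D^\top$ as an empirical mean of i.i.d.\ rank-one matrices and appeal to the Matrix Bernstein inequality (Proposition~\ref{prop:bernstein}). Writing $\vec y_i = \vec x_i^1 - \vec x_i^2$ and $Y_i = \vec y_i \vec y_i^\top$, the samples $(\vec x_i^1, \vec x_i^2)$ are drawn i.i.d.\ (first $\vec w \sim \P$, then a pair from $\D^{\vec w}$), so the $Y_i$ are i.i.d.\ symmetric PSD matrices with $\frac{1}{m} D D^\top - \E[\frac{1}{m} D D^\top] = \frac{1}{m}\sum_{i=1}^m (Y_i - \E Y_i)$. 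I would apply Bernstein to this centered sum with deviation parameter $t = m\delta_0/4$.

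To invoke the inequality I need a uniform norm bound on each summand and a matrix-variance bound, both of which follow from the assumption $\|\vec x_i^j\|_2 \leq M$. By the triangle inequality $\|\vec y_i\|_2 \leq 2M$, so $\|Y_i\|_2 \leq 4M^2$ and hence $\|Y_i - \E Y_i\|_2 \leq 8M^2 =: R$. For the variance, the rank-one identity $Y_i^2 = \|\vec y_i\|_2^2\, Y_i \preceq 4M^2 \, Y_i$, together with the general fact $\E[(Y_i - \E Y_i)^2] \preceq \E[Y_i^2]$, gives $\|\E[(Y_i - \E Y_i)^2]\|_2 \leq 4M^2\,\|\E Y_i\|_2 \leq 16M^4$, so the matrix-variance parameter is $v \leq 16 m M^4$.

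Plugging these bounds into Matrix Bernstein yields a failure probability of at most $2n\exp(-c\, m \delta_0^2 / M^4)$ for an absolute constant $c$, provided the sub-Gaussian term $v$ dominates the sub-exponential term $Rt$ --- which holds whenever $\delta_0 \lesssim M^2$, the only regime of interest since $\|\E[\frac{1}{m} D D^\top]\|_2 \leq 4M^2$ anyway. Setting this tail probability equal to $\delta$ and solving for $m$ yields the claimed $m = O\bigl((M^4/\delta_0^2)\log(n/\delta)\bigr)$.

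I do not expect any serious obstacle: this is a textbook application of Matrix Bernstein once one exploits the rank-one structure of $Y_i$ to obtain an $M^4$ (rather than $nM^4$) variance bound. The only bookkeeping care needed is to check which of the two Bernstein regimes (sub-Gaussian vs.\ sub-exponential) is binding and to observe that the $M^4/\delta_0^2$ term is the dominant one.
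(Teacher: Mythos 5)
Your proposal is correct and follows essentially the same route as the paper: decompose $\frac1m DD^\top - \E[\frac1m DD^\top]$ into a sum of i.i.d.\ centered rank-one summands $\vec d_i\vec d_i^\top$, bound the uniform norm by $O(M^2)$ and the matrix variance by $O(mM^4)$ using $\|\vec d_i\|\le 2M$, and apply Matrix Bernstein (Proposition~\ref{prop:bernstein}), distinguishing the two Bernstein regimes exactly as the paper does. Your variance bound via $Y_i^2 \preceq 4M^2 Y_i$ is a slightly cleaner derivation than the paper's crude $\|\E[S_iS_i^\top]\|\le L^2$, but it lands on the same quantity and the same sample complexity.
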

This completes the analysis of Phase 1 of our algorithm and the proof of Lemma~\ref{lem:phase1-noise} follows directly from the above analysis and the application of Claims~\ref{claim:EE-estimate} and \ref{claim:DE-estimate} with the error of $\epsilon \delta_0$, and Claim~\ref{claim:DD-estimate}.

\subsection{Proof of Lemma~\ref{lem:phase-denoise} --- Denoising Step} \label{sec:denoise}

Having approximately recovered a projection matrix $\hat P$ for $\spn \{\vec v_1, \dots, \vec v_k \}$, we can now use this subspace to partially  denoise the samples while approximately preserving $\Delta = \convh(\{\vec a_1, \dots, \vec a_k\})$.
At a high level, when considering the projection of samples on $\hat P$, one can show that 1)
the regions around $\vec a_i$ have sufficiently high density, and, 2) the regions that are far from $\Delta$ have low density.

We claim that if $\hat x_\parallel \in \hat S_\parallel$ is \emph{non-noisy and corresponds almost purely to one class} then $\hat S_\parallel$ also includes a non-negligible number of points within $O(\epsilon')$ distance of $\hat x_\parallel$.
This is due to the fact that a non-negligible number of points (about $p_0 \pg m$ points) correspond to non-noisy and almost-pure samples that using $P$ would get projected  to points within a distance of $O(\epsilon')$ of each other.
Furthermore, the inaccuracy in $\hat P$ can only perturb the projections up to $O(\epsilon')$ distance. So, the projections of all non-noisy samples that are purely of class $i$ fall within $O(\epsilon')$ of $\vec a_i$. The following lemma, whose proof appears in Appendix~\ref{app:claim:high-density}, formalizes this claim.

In the following lemmas, let $D$ denote the  flattened distribution of the first paragraphs. That is, the distribution over $\hvec x^1$ where we first take $\vec w\sim \P$, then take $(\vec x^1, \vec x^2) \sim \D^{\vec w}$, and finally take $\hvec x^1$.

\begin{claim} \label{claim:high-dense}
For all $i\in [k]$,
$\Pr_{\vec x \sim D}\left[ \hat P\vec x \in B_{\epsilon'/4}(\vec a_i)\right] \geq p_0  \pg.$
\end{claim}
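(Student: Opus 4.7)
The plan is to exhibit a specific ``good'' event that occurs with probability at least $p_0 \pg$ and to show that on this event the projected sample $\hat P \vec x$ lands inside $B_{\epsilon'/4}(\vec a_i)$. Concretely, I will consider the intersection of two independent events: (a) the sample is drawn noise-free (probability $p_0$ by the noise model), and (b) the mixture vector $\vec w$ sampled from $\P$ satisfies $\|\vec w - \vec e_i\|_1 \leq \epsilon'/(8k\sizeA)$, which happens with probability at least $g(\epsilon'/(8k\sizeA)) = \pg$ by the dominant-admixture assumption. Since the noise flip is independent of $\vec w$ and of the drawn paragraph, the joint event has probability at least $p_0 \pg$, which is exactly the bound we want.

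Next I would show that on this event $\|\hat P \vec x - \vec a_i\| \leq \epsilon'/4$ by splitting into two pieces via the triangle inequality: $\|\hat P \vec x - \vec a_i\| \leq \|\hat P \vec x - P \vec x\| + \|P \vec x - \vec a_i\|$. For the first piece, because the sample is non-noisy we have $\|\vec x\| \leq M$, so the assumption $\|P - \hat P\| \leq \epsilon'/(8M)$ immediately gives $\|\hat P \vec x - P \vec x\| \leq \epsilon'/8$. For the second piece, I would invoke Lemma~\ref{lem:sum-alpha-u}, which tells us $P \vec x = \sum_{j\in[k]} (\vec v_j\cdot \vec x)\, \vec a_j$; in the non-noisy case $\vec v_j \cdot \vec x = w_j$, so $P\vec x = \sum_j w_j \vec a_j$, and
\[
\|P \vec x - \vec a_i\| = \Bigl\|(w_i - 1)\vec a_i + \sum_{j\neq i} w_j \vec a_j \Bigr\| \leq \sizeA \|\vec w - \vec e_i\|_1 \leq \sizeA \cdot \frac{\epsilon'}{8 k \sizeA} \leq \frac{\epsilon'}{8}.
\]
Combining the two bounds yields $\|\hat P \vec x - \vec a_i\| \leq \epsilon'/4$, so $\hat P \vec x \in B_{\epsilon'/4}(\vec a_i)$.

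There is essentially no hard step: the only subtleties are (i) being careful that the independence of the noise indicator from $\vec w$ allows us to multiply the probabilities, and (ii) bookkeeping so that the constants $\sizeA$ in the norm bound on $\vec a_i$ cancel the $1/(k\sizeA)$ factor inside $\pg$, and that the factor $1/M$ in the projection-approximation assumption cancels the $M$ bound on $\|\vec x\|$. These cancellations are exactly why the denoising algorithm sets $\pg = g(\epsilon'/(8k\sizeA))$ and requires $\|P - \hat P\| \leq \epsilon'/(8M)$.
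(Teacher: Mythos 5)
Your proposal is correct and follows essentially the same route as the paper's proof: intersect the independent events ``non-noisy'' and ``nearly pure mixture'' to get probability $p_0\pg$, then combine the bound $\|P\vec x - \vec a_i\| \leq \epsilon'/8$ (the paper writes this as $k\cdot\frac{\epsilon'}{8k\sizeA}\cdot\sizeA$, equivalent to your $L_1$ bound) with $\|P-\hat P\|\leq \epsilon'/(8M)$ and $\|\vec x\|\leq M$ via the triangle inequality.
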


On the other hand, any projected point that is far from the convex hull of $\vec a_1, \dots, \vec a_k$ has to be noisy, and as a result, has been generated by a Gaussian distribution with variance $\sigma^2$. For a choice of $\epsilon'$ that is small with respect to $\sigma$, such points do not concentrate well within any ball of radius $\epsilon'$. 
In the next lemma we show that the regions that are far from the convex hull have low density.

\begin{claim} \label{claim:low-dense}
For any $\vec z$ such that $\dist(\vec z, \Delta) \geq \epsilon'$, we have
$\Pr_{\vec x \sim D}\left[ \hat P \vec x \in B_{\epsilon'/2}(\vec z)\right] \leq \frac{p_0 \pg}{4}.
$
\end{claim}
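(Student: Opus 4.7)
The plan is to split the probability into two cases according to whether the sample is corrupted by Gaussian noise or not, and show that each contributes at most $p_0\pg/4$ (or zero). In the non-noisy case, which occurs with probability $p_0$, we have $\hvec{x}^1 = \vec{x}^1$ with $\vec{x}^1 \in \X^{\vec w}$ for some $\vec w$. By Lemma~\ref{lem:sum-alpha-u}, $P\vec{x}^1 = \sum_i w_i \vec{a}_i \in \Delta$, so using the hypothesis on the projection matrix error and the bound $\|\vec{x}^1\|\leq M$,
$$\dist(\hat P \vec{x}^1, \Delta) \leq \|\hat P \vec{x}^1 - P\vec{x}^1\| \leq \|\hat P - P\|\cdot \|\vec{x}^1\| \leq \frac{\epsilon'}{8M}\cdot M = \frac{\epsilon'}{8}.$$
Since $\dist(\vec z, \Delta) \geq \epsilon'$, every point of $B_{\epsilon'/2}(\vec z)$ is at distance at least $\epsilon'/2 > \epsilon'/8$ from $\Delta$, so $\hat P \vec{x}^1 \notin B_{\epsilon'/2}(\vec z)$, contributing zero to the probability.

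For the noisy case, which occurs with probability $1-p_0$, we have $\hat P \hvec{x}^1 = \hat P \vec{x}^1 + \hat P \vec{e}$ where $\vec{e}\sim \N(\vec 0, \sigma^2 I_n)$. Since $\hat P$ is an orthogonal projection of rank $k$, the random vector $\hat P \vec{e}$ is supported on the $k$-dimensional image of $\hat P$ and restricted to that subspace is distributed as $\N(\vec 0, \sigma^2 I_k)$. Conditioning on $\vec x^1$, the event $\hat P \hvec{x}^1 \in B_{\epsilon'/2}(\vec z)$ forces $\hat P \vec{e}$ to lie in the (possibly empty) intersection of $B_{\epsilon'/2}(\vec z - \hat P \vec{x}^1)$ with this subspace, a set of $k$-dimensional volume at most $V_k (\epsilon'/2)^k$. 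Bounding the conditional probability by the supremum of the Gaussian density times this volume gives
$$\Pr\!\bigl[\hat P \vec{e} \in B_{\epsilon'/2}(\cdot)\bigr] \;\leq\; \frac{1}{(2\pi\sigma^2)^{k/2}} \cdot \frac{\pi^{k/2}(\epsilon'/2)^k}{\Gamma(k/2+1)} \;=\; \frac{1}{\Gamma(k/2+1)}\!\left(\frac{(\epsilon')^2}{8\sigma^2}\right)^{\!k/2}.$$
Applying Stirling's bound $\Gamma(k/2+1)\geq (k/(2e))^{k/2}$ and the hypothesis $\epsilon'\leq \frac{1}{3}\sigma\sqrt k$ (so $(\epsilon')^2/(8\sigma^2) \leq k/72$) reduces this to $(e/36)^{k/2}$. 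Multiplying by the noisy probability $(1-p_0)$ and taking expectation over $\vec{x}^1$ gives the claimed upper bound.

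The main obstacle is the final numerical reconciliation: $\pg = g(\epsilon'/(8k\sizeA))$ can be polynomially small in the parameters, whereas the $k$-dimensional Gaussian mass estimate $(e/36)^{k/2}$ is only mildly decreasing in $k$ and is quite loose for small $k$. The central saving is that this mass decays in $k$ while $\pg$ does not depend on $\sigma$, so for $\epsilon'$ made an appropriate constant factor smaller than $\sigma\sqrt k$ (implicit in the constant $1/3$ in the hypothesis) the inequality closes; carefully tracking these constants through Stirling and through the choice of $\pg$ is the fragile step, whereas the structural part of the argument---separating noisy and non-noisy samples and using that the Gaussian is supported on a low-dimensional subspace---is straightforward.
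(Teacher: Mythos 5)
The first half of your argument---showing that non-noisy samples cannot project into $B_{\epsilon'/2}(\vec z)$ because $\hat P\vec x$ lies within $\epsilon'/8$ of $\Delta$ while every point of the ball is at distance at least $\epsilon'/2$ from $\Delta$---is exactly the paper's first step. Your treatment of the noisy case also starts the same way (the projection of the spherical noise onto the $k$-dimensional range of $\hat P$ is $\N(\vec 0,\sigma^2 I_k)$ with a data-dependent center, and one bounds the worst case over centers), and your device of conditioning on $\vec x^1$ and bounding the conditional probability uniformly is if anything cleaner than the paper's informal coupling argument.

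However, the quantitative bound you use in the noisy case leaves a genuine gap, and it is more than a matter of ``carefully tracking constants.'' Your density-times-volume estimate yields $(1-p_0)\,(e/36)^{k/2}$, a quantity that is completely independent of $p_0$ and $\pg$, whereas the target $p_0\pg/4$ scales with $\pg = g(\epsilon'/(8k\sizeA))$, which is typically polynomially small in the problem parameters. For any fixed $k$ (say $k=2$, where your bound is about $0.075$) the inequality $(1-p_0)(e/36)^{k/2}\le p_0\pg/4$ simply fails once $p_0\pg$ is small, and no sharpening of Stirling or of the constant $1/3$ in $\epsilon'\le\frac13\sigma\sqrt k$ can repair this, because the left side has no mechanism to shrink with $\pg$. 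The paper instead bounds the worst case by the probability that a centered $\N(\vec 0,\sigma^2 I_k)$ lands in $B_{3\epsilon'/2}(\vec 0)$ and invokes the lower tail of the Gaussian norm, $\Pr[\|\vec x\|\le\sigma\sqrt{k(1-t)}]\le\exp(-kt^2/16)$, \emph{choosing} $t=\sqrt{(16/k)\ln(4/(p_0\pg))}$ so that the resulting bound is exactly $p_0\pg/4$; the hypothesis $3\epsilon'/2\le\sigma\sqrt k/2$ is then used to verify that this $t$ is admissible. That is the missing idea: the radius threshold in the tail bound must be tuned to $\ln(1/(p_0\pg))$ rather than fixed, so that the estimate degrades gracefully as $\pg$ shrinks. (The paper's own version quietly requires $k\gtrsim\ln(1/(p_0\pg))$ for its chain of inequalities to hold, so your instinct that something delicate happens here is right---but the fix runs through the norm-concentration bound, not through the volume bound.)
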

\noindent{\it Proof.}~
We first show that $B_{\epsilon'/2}(\vec z)$ does not include any non-noisy points. Take any non-noisy sample $\vec x$. Note that $P\vec x = \sum_{i=1}^k w_i \vec a_i$, where $w_i$ are the mixture weights corresponding to point $\vec x$. We have,
\[  \left\|  \vec z - \hat P \vec x \right\| =   \left\| \vec z - \sum_{i=1}^k w_i \vec a_i + (P - \hat  P) \vec x \right\| \geq 
 \left\| \vec z - \sum_{i=1}^k w_i \vec a_i  \right\| - \| P -  \hat P\| \|\vec x\| \geq\epsilon'/2
\]
\begin{wrapfigure}[11]{r}{0.35\textwidth}
 \vspace{-1cm}
  \begin{center}
    \includegraphics[width=0.29\textwidth]{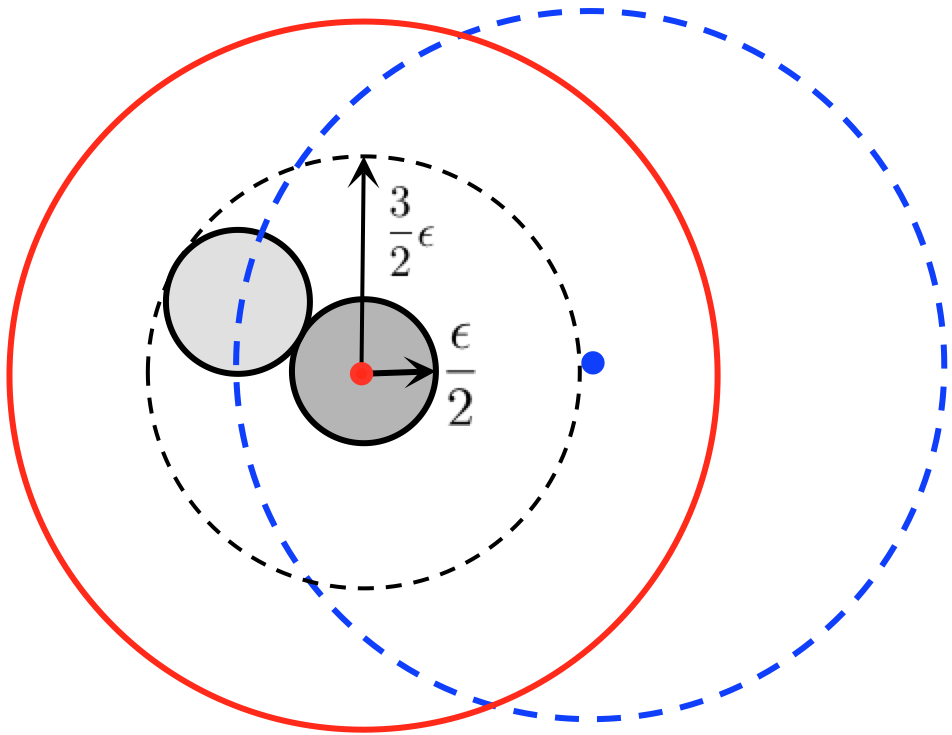}
  \end{center}
 \vspace{-0.6cm}\caption{\small Density is maximized when blue and red gaussians coincide and the ball is at their center.}
\label{fig:gauss2}
\end{wrapfigure}
Therefore, $B_{\epsilon'/2}(\vec z)$ only contains noisy points. 
Since  noisy points are perturbed by a spherical 
Gaussian, the projection of these points on any $k$-dimensional subspace can be thought of points generated from a $k$-dimensional Gaussian distributions with variance $\sigma^2$ and potentially different centers.
One can show that the densest ball of any radius is at the center of a Gaussian.
Here, we prove a slightly weaker claim. 
Consider one such Gaussian distribution, $\N(\vec 0, \sigma^2 I_k)$.  Note that the pdf of the Gaussian distribution decreases as we get farther from its center. By a coupling between the density of the points, $B_{\epsilon'/2}(\vec 0)$ has higher density than any $B_{\epsilon'/2}(\vec c)$ with $\|\vec  c\|_2>  \epsilon'$.
Therefore, 
\[ \sup_{\vec c}  \Pr_{\vec x\sim \N(\vec 0, \sigma^2I_k)}[\vec x\in B_{\epsilon'/2}(\vec c)] \leq \Pr_{\vec x\sim \N(\vec 0, \sigma^2I_k)}[\vec x \in B_{3\epsilon'/2}(\vec 0)].  
\]
So, over $D$ this value will be maximized if the Gaussians had the same center (see Figure~\ref{fig:gauss2}). Moreover, in $\N(\vec 0, \sigma^2 I_k)$, 
$\Pr[\|\vec x\|_2 \leq \sigma \sqrt{ k (1- t)}]\leq \exp(-k t^2/ 16).
$
Since
 $3 \epsilon'/2 \leq   \sigma\sqrt{k} /2   \leq  \sigma \sqrt{ k (1- \sqrt{\frac{16}{k} \ln\frac{4}{p_0 \pg}})}$ we have
\[ \Pr_{\hvec x \sim D} [\vec x\in B_{\epsilon'/2}(\vec c)] \leq     \Pr_{\vec x\sim \N(\vec 0, \sigma^2I_k)}[\|\vec x\|_2 \leq 3\epsilon'/2]          \leq  \frac{p_0 \pg}{4}.\]
\qed

The next claim shows that in a large sample set,  the fraction of samples that fall within any of the described regions in Claims~\ref{claim:high-dense} and \ref{claim:low-dense} is close to the density of that region. The proof of this claim follows from VC dimension of the set of balls.

\begin{claim} \label{claim:vc}
Let $D$  be any distribution over $\R^k$ and $\vec x_1, \dots, \vec x_m$ be $m$ points drawn i.i.d from $D$.  Then $m = O(\frac{k}{\gamma} \ln \frac{1}{\delta})$ is sufficient so that with probability $1-\delta$, for any  ball $B \subseteq \R^k$ such that 
$\Pr_{\vec x\sim D}[\vec x\in B] \geq 2\gamma$, $| \{\vec x_i \mid \vec x_i \in B\} |> \gamma m$ and for any
ball $B \subseteq \R^k$ such that 
$\Pr_{\vec x\sim D}[\vec x\in B] \leq \gamma/2$, $| \{\vec x_i \mid \vec x_i \in B\} | < \gamma m$. 

\end{claim}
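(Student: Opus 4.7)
The claim is a standard uniform convergence statement over the class $\mathcal{B}$ of Euclidean balls in $\R^k$, so my plan is to invoke a relative (multiplicative) Vapnik--Chervonenkis inequality applied to $\mathcal{B}$.

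First I would recall that $\mathcal{B}$ has VC dimension $d = k+1$: under the lifting $\vec x \mapsto (\vec x, \|\vec x\|^2) \in \R^{k+1}$, each ball $B_r(\vec c)$ becomes the intersection of a halfspace with the image paraboloid, so shattering by balls reduces to shattering by halfspaces in $\R^{k+1}$, giving $d \leq k+1$, and it is easy to exhibit $k+1$ affinely independent points that can be shattered.

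Second, I would apply the relative VC inequality (Vapnik--Chervonenkis; see Anthony--Bartlett, or the $(\alpha,\beta)$-approximation theorem of Haussler and Li--Long--Srinivasan). With $m = O((k/\gamma)\ln(1/\delta))$ samples and a sufficiently large hidden constant, this uniform bound guarantees that, with probability at least $1-\delta$, every $B \in \mathcal{B}$ satisfies
\[
\bigl| \hat p(B) - p(B) \bigr| \leq \tfrac{1}{4}\, \max\{p(B),\gamma\},
\]
where $p(B) = \Pr_{\vec x \sim D}[\vec x \in B]$ and $\hat p(B) = |\{i : \vec x_i \in B\}|/m$ is the empirical frequency.

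Third, I would split into the two cases. If $p(B) \geq 2\gamma$, the RHS is at most $p(B)/4$, so $\hat p(B) \geq 3 p(B)/4 \geq 3\gamma/2 > \gamma$ and hence $|\{i : \vec x_i \in B\}| > \gamma m$. If $p(B) \leq \gamma/2$, the RHS is at most $\gamma/4$, so $\hat p(B) \leq \gamma/2 + \gamma/4 < \gamma$ and hence $|\{i : \vec x_i \in B\}| < \gamma m$. Both directions follow from a single application of the uniform bound.

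The main subtlety is citing a form of the relative VC bound that yields the stated $O((k/\gamma)\ln(1/\delta))$ complexity without an additional $\ln(1/\gamma)$ factor; the $(\alpha,\beta)$-approximation framework provides exactly this rate. A fully elementary alternative is to build an $\epsilon$-net over the $(k+1)$ parameters of a ball (center plus radius, discretized at scale depending on $\gamma$ and the effective support diameter), apply the multiplicative Chernoff bound pointwise, union-bound over the net, and handle the resulting discretization error by monotonicity of $\mathbf{1}[\vec x \in B]$ in the radius; this route reproduces the same sample complexity up to constants and is where I would expect the most bookkeeping.
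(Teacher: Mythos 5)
Your proposal takes essentially the same approach as the paper: the paper's entire proof is the one-line assertion that the claim ``follows from VC dimension of the set of balls,'' and your argument (VC dimension $k+1$ for balls via the paraboloid lifting, plus a relative/multiplicative uniform convergence bound, then the two-case split) is the standard way to make that assertion precise. The subtlety you flag --- that the usual relative VC or $(p,\epsilon)$-approximation bounds carry an extra $\ln(1/\gamma)$ factor not reflected in the stated $O(\frac{k}{\gamma}\ln\frac{1}{\delta})$ --- is a legitimate point that the paper glosses over entirely.
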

Therefore, upon seeing $\Omega(\frac{k}{p_0 \pg} \ln \frac1\delta)$ samples, with probability $1-\delta$, for all $i\in[k]$ there are more than $p_0 \pg m/2$ projected points within distance $\epsilon'/4$ of $\vec a_i$ (by Claims~\ref{claim:high-dense} and \ref{claim:vc}), and, no point that is $\epsilon'$ far from $\Delta$ has more than $p_0 \pg m/2$ points in its $\epsilon'/2$-neighborhood (by Claims~\ref{claim:low-dense} and \ref{claim:vc}).
Phase 2 of Algorithm~\ref{alg:noise-gaussian} leverages these properties of the set of projected points for denoising the samples while preserving $\Delta$: Remove any point from $\hat S_\parallel$ that has fewer than $p_0 \pg m/2$  neighbors within distance $\epsilon'/2$.

We conclude the proof of Lemma~\ref{lem:phase-denoise} by noting that the remaining points in $\hat S_\parallel$ are all within distance $\epsilon'$ of $\Delta$. Furthermore, any point in $B_{\epsilon'/4}(\vec a_i)$ has more than  $p_0 \pg m/2$ points within distance of $\epsilon'/2$. Therefore, such points remain in $\hat S_\parallel$ and any one of them can serve as $\hvec a_i$ for which $\| \vec a_i - \hvec a_i\|\leq \epsilon'/4$.

\subsection{Proof of Lemma~\ref{lem:phase2-noise} --- Phase 2} \label{sec:phase2}
\begin{figure}
\centering
  \begin{subfigure}{0.55\textwidth}
  \centering
          \vspace*{-0.5cm}
    \includegraphics[width=0.85\textwidth]{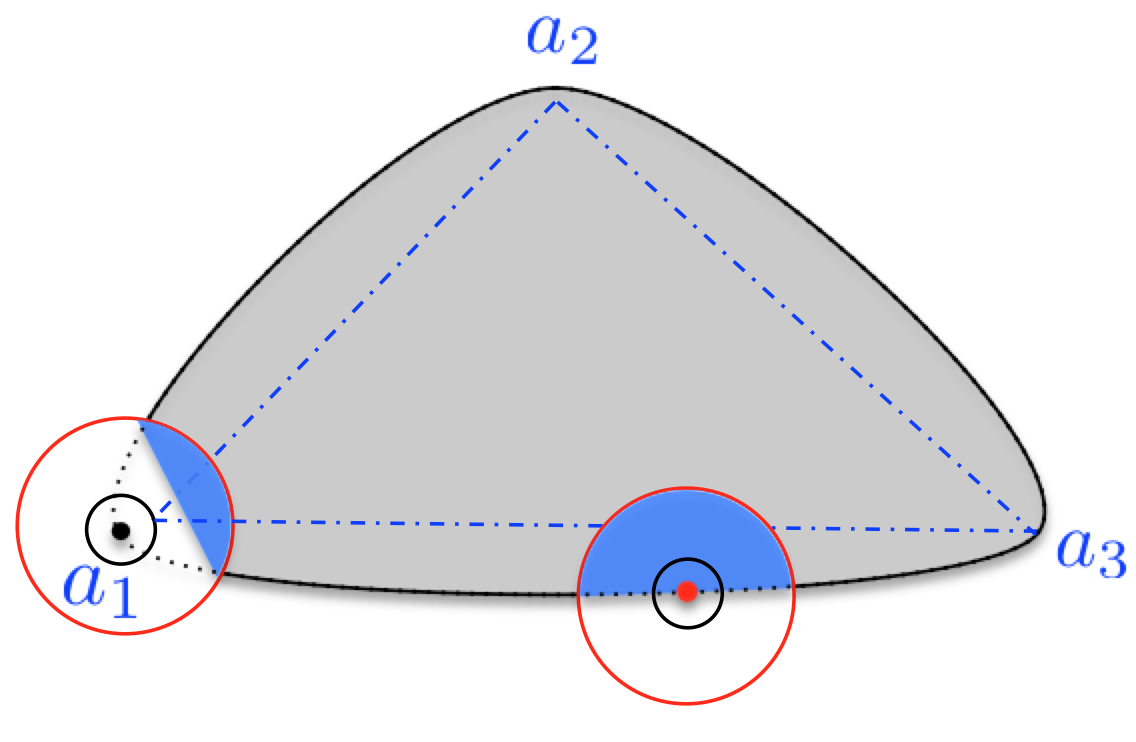}
     \caption{}
     \label{fig:noisysimplex}
  \end{subfigure}
  ~
  \begin{subfigure}{0.4 \textwidth}
  \centering
        \includegraphics[width=0.75\textwidth]{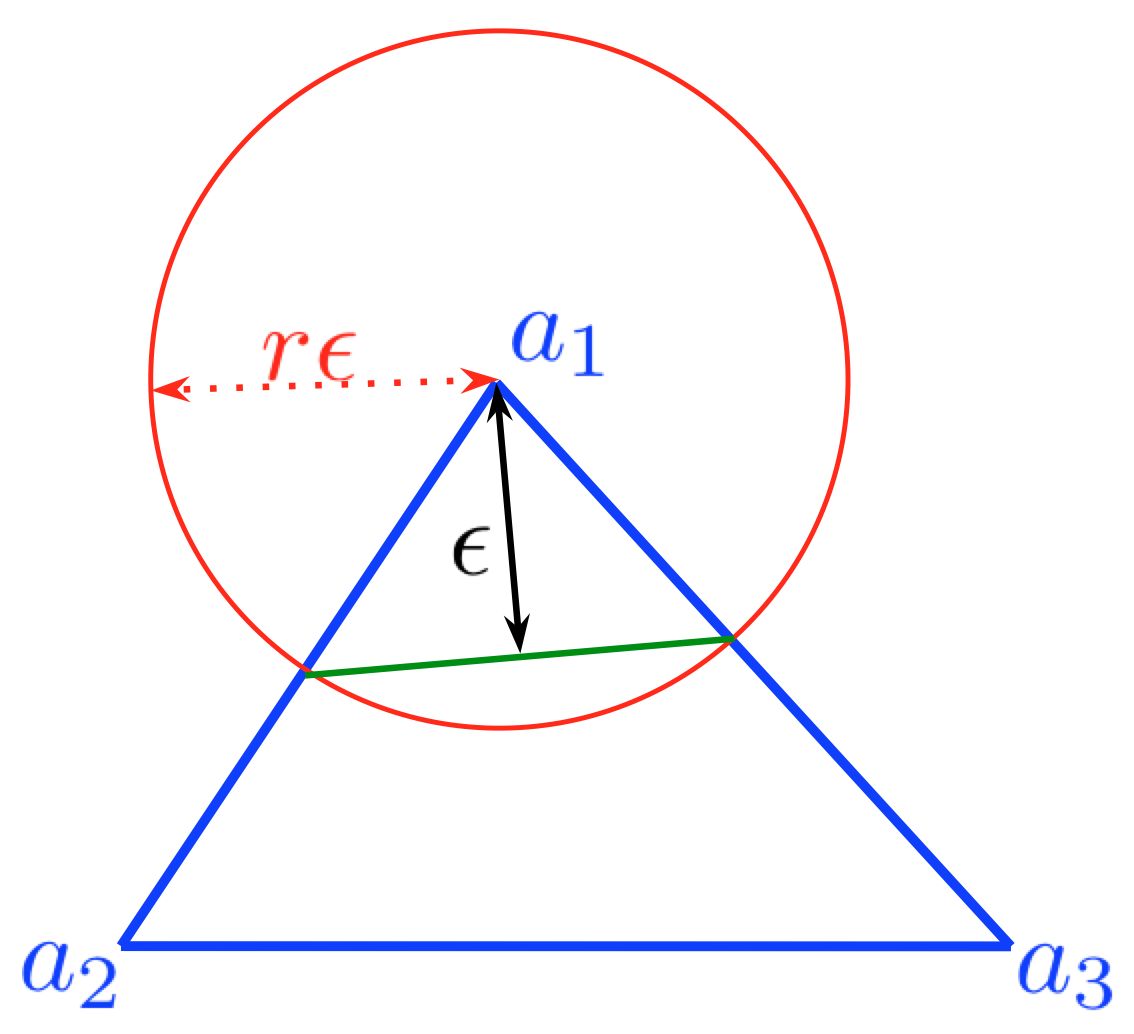}
        \caption{}
        \label{fig:skew}
  \end{subfigure}
  \caption{\small(a)~ Demonstrating the distinction between points close to $\vec a_i$  and far from $\vec a_i$. The convex hull of $CH(\hat S_{||} \setminus B_{r_2}(\hvec x))$, which is a subset of the blue and gray region, intersects $B_{r_1}(\hvec x)$ only for $\hvec x$ that is sufficiently far from $\vec a_i$'s. (b)~ Parameter $r$ is determined by the geometry of $\Delta$. }
  \label{..}
\end{figure}

At a high level, we consider two balls around each projected sample point $\hvec x \in \hat S_\parallel$ with appropriate choice of radii $r_1< r_2$ (see Figure~\ref{fig:noisysimplex}).
Consider the set of projections $\hat S_\parallel$ when points in $B_{r_2}(\vec x)$ are removed from it.
For points that are far from all $\vec a_i$, this set still includes points that are close to  $\vec a_i$  for all topics $i\in [k]$.
So, the convex hull of $\hat S_\parallel \setminus B_{r_2}(\vec x)$ is close to $\Delta$, and in particular, intersects $B_{r_1}(\vec x)$.
On the other hand, for $\vec x$ that is close to $\vec a_i$,  $\hat S_\parallel \setminus B_{r_2}(\vec x)$ does not include an extreme point of $\Delta$  or points close to it. So, 
the convex hull of $\hat S_\parallel \setminus B_{r_2}(\vec x)$ is considerably smaller than $\Delta$, and in particular, does not intersect $B_{r_1}(\vec x)$.

The geometry of the simplex and the angles between $\vec a_1, \dots, \vec a_k$ play an important role in choosing the appropriate $r_1$ and $r_2$. Note that when the samples are perturbed by noise, $\vec a_1, \dots, \vec a_k$
can only be approximately recovered if they are sufficiently far apart and the angles of the simplex at each  $\vec a_i$ is far from being flat. 
That is, we assume that for all $i\neq j$, $\| \vec a_i - \vec a_j \| \geq 3 \epsilon$. 
Furthermore, define $r\geq 1$ to be the smallest value such that
the distance between $\vec a_i$ and $\convh( \Delta \setminus B_{ r \epsilon}(\vec a_i))$ is at least $\epsilon$.
Note that such a value of $r$ always exists and depends entirely on the angles of the simplex defined by the class vectors. Therefore, the number of samples needed for our method depends on the value of $r$. The smaller the value of $r$, the larger is the separation between the topic vectors and the easier it is to identify them.
See Figure~\ref{fig:skew} for a demonstration of this concept.

\begin{claim} \label{claim:extreme-noisy}
Let $\epsilon' = \epsilon/8r$.
Let $\hat S_{\parallel}$ be the set of denoised projections, as in step~\ref{item:S||} of Algorithm~\ref{alg:noise-gaussian}.
For any $\hvec x\in \hat S_{\parallel}$ such that for all $i$, $\| \hvec x - \vec a_i \| > 8r\epsilon'$, 
$\dist(\hvec x,  \convh(\hat S_\parallel \setminus B_{6r\epsilon'}(\hvec x) ) ) \leq  2\epsilon'$.
Furthermore, for all $i\in[k]$ there exists $\hvec a_i\in \hat S_\parallel$  such that $\| \hvec a_i - \vec a_i \|  < \epsilon'$ and
$\dist(\hvec a_i , \convh(\hat S_\parallel \setminus B_{6r\epsilon'}(\hvec a_i ))) >   2\epsilon'$.
\end{claim}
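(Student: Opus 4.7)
The plan is to prove each part by reducing statements about $\hat S_\parallel$ to statements about the clean simplex $\Delta$, leveraging the two guarantees from Lemma~\ref{lem:phase-denoise}: every $\hvec z \in \hat S_\parallel$ lies within $\epsilon'$ of $\Delta$, and for each $i$ some $\hvec a_i \in \hat S_\parallel$ lies within $\epsilon'$ of the vertex $\vec a_i$. Throughout, I will match the given radii by exploiting the choice $\epsilon' = \epsilon/8r$ and the hypothesis $r \geq 1$.

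For the upper bound, fix $\hvec x \in \hat S_\parallel$ with $\|\hvec x - \vec a_i\| > 8r\epsilon'$ for all $i$, and fix denoised vertices $\hvec a_1,\ldots,\hvec a_k \in \hat S_\parallel$ from Lemma~\ref{lem:phase-denoise}. Choose $\vec y \in \Delta$ with $\|\hvec x - \vec y\| \leq \epsilon'$ and write $\vec y = \sum_i w_i \vec a_i$ with $w_i \geq 0$, $\sum_i w_i = 1$. The triangle inequality gives $\|\hvec x - \hvec a_i\| > 8r\epsilon' - \epsilon' > 6r\epsilon'$ (using $r \geq 1$), so every $\hvec a_i$ lies in $\hat S_\parallel \setminus B_{6r\epsilon'}(\hvec x)$. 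Then $\hvec y := \sum_i w_i \hvec a_i$ belongs to $\convh(\hat S_\parallel \setminus B_{6r\epsilon'}(\hvec x))$ and satisfies $\|\vec y - \hvec y\| \leq \sum_i w_i \|\vec a_i - \hvec a_i\| \leq \epsilon'$, giving $\|\hvec x - \hvec y\| \leq 2\epsilon'$ as required.

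For the lower bound, fix $\hvec a_i \in \hat S_\parallel$ with $\|\hvec a_i - \vec a_i\| \leq \epsilon'$. For any $\hvec z \in \hat S_\parallel \setminus B_{6r\epsilon'}(\hvec a_i)$, pick $\vec z \in \Delta$ with $\|\hvec z - \vec z\| \leq \epsilon'$; two applications of the triangle inequality yield $\|\vec z - \vec a_i\| > 6r\epsilon' - \epsilon' - \epsilon' = (6r-2)\epsilon'$, so $\vec z \in \Delta \setminus B_{(6r-2)\epsilon'}(\vec a_i)$. The key geometric fact is that the definition of $r$ extends by scale invariance at the vertex: since $\Delta$ near $\vec a_i$ is a polyhedral cone, $\dist(\vec a_i, \convh(\Delta \setminus B_{\rho}(\vec a_i))) \geq \rho/r$ for every $\rho$ small enough compared to the simplex (the paper's definition is the instance $\rho = r\epsilon$). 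Applying this with $\rho = (6r-2)\epsilon'$ and using $r\geq 1$ gives $\dist(\vec a_i, \convh\{\vec z : \vec z \in \Delta,\ \|\vec z - \vec a_i\| \geq (6r-2)\epsilon'\}) \geq (6 - 2/r)\epsilon' \geq 4\epsilon'$. Absorbing one $\epsilon'$ to pass from each $\vec z$ to the corresponding $\hvec z$ in the convex hull, and another $\epsilon'$ to replace $\vec a_i$ by $\hvec a_i$ as the reference point, we conclude $\dist(\hvec a_i, \convh(\hat S_\parallel \setminus B_{6r\epsilon'}(\hvec a_i))) \geq 2\epsilon'$; strictness follows because the hypothesis $\|\hvec z - \hvec a_i\| > 6r\epsilon'$ is strict.

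The main obstacle is the scale-invariance step used in the lower bound: the paper's definition of $r$ only states the bound at the single radius $r\epsilon$, whereas the argument needs it at the different radius $\rho = (6r-2)\epsilon'$. This step requires observing that in a neighborhood of $\vec a_i$, $\Delta$ is a polyhedral cone, so the function $\rho \mapsto \dist(\vec a_i, \convh(\Delta \setminus B_\rho(\vec a_i)))$ is linear in $\rho$ up to the scale of the simplex edges, and its slope is exactly $1/r$ by the definition. Once this geometric claim is in hand, the remainder is routine triangle-inequality accounting of the $\epsilon'$-errors introduced by the denoising step and by the perturbed vertex $\hvec a_i$.
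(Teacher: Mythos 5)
Your proof is correct and follows essentially the same route as the paper's: reduce both parts to the clean simplex $\Delta$ via the two guarantees of Lemma~\ref{lem:phase-denoise}, build the witness $\sum_i w_i \hvec a_i$ for the upper bound, and combine the definition of $r$ with triangle-inequality bookkeeping for the lower bound. The one place you go beyond the paper is in making explicit the scale-invariance of $\rho \mapsto \dist(\vec a_i, \convh(\Delta \setminus B_{\rho}(\vec a_i)))$ near a vertex; the paper silently invokes the definition of $r$ at radius $4r\epsilon'$ rather than $r\epsilon$, so your observation is a needed (and correct) clarification rather than a detour.
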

\begin{proof}
Recall that by Lemma~\ref{lem:phase-denoise}, for any $\hvec x\in \hat S_\parallel$ there exists  $\vec x\in \Delta$ such that $\| \hvec x - \vec x\| \leq \epsilon'$ and for all $i\in[k]$, there exists $\hvec a_i \in \hat S_\parallel$ such that $\| \hvec a_i - \vec a_i\|\leq \epsilon'$.
For the first part, let $\vec x = \sum_i \alpha_i \vec a_i \in \Delta$ be the corresponding point to $\hvec x$, where $\alpha_i$'s are the coefficients of the convex combination.
Furthermore, let $\vec x' = \sum_i \alpha_i \hvec a_i$. 
We have,
\[ \| \vec x'  - \hvec x\| \leq \left\| \sum_{i=1}^k \alpha_i \hvec a_i  -  \sum_{i=1}^k \alpha_i \vec a_i   + \vec x - \hvec x \right\| \leq  \left\| \max_{i\in[k]} ~( \hvec  a_i  - \vec a_i) \right\|  + \left\|\vec x - \hvec x \right\| \leq 2\epsilon'.
\]
The first claim follows from the fact that $\| \hvec x - \vec a_i\| > 8r\epsilon'$ and as a result  $\vec x' \in  \convh(\hat S_\parallel \setminus B_{6r\epsilon'}(\hvec x))$.
Next, note that $B_{4r\epsilon'}(\vec a_i) \subseteq B_{5r\epsilon'} (\hvec a_i)$.
So, by the fact that $\| \vec a_i - \hvec a_i\| \leq \epsilon'$, 
\[ \dist \left(    \hvec a_i, \convh(\Delta \setminus B_{5r\epsilon'}(\hvec a_i) )  \right)\geq 
    \dist\left(   \vec a_i, \convh(\Delta \setminus B_{4r\epsilon'}(\vec a_i))    \right)  - \epsilon'\geq 3\epsilon'.
\]
Furthermore, we argue that if there is $\hvec x \in \convh(\hat S_\parallel\setminus B_{5r\epsilon'}(\hvec a_i))$ then there exists $\vec x\in \convh(\Delta \setminus B_{4r\epsilon'}(\hvec a_i))$, such that $\| \vec x - \hvec x\|\leq \epsilon'$. The proof of this claim is relegated to Appendix~\ref{app:CH_claim}. Using this claim, we have
$\dist\left(   \hvec a_i, \convh(\hat S_\parallel \setminus B_{6r\epsilon'}(\hvec a_i)) \right) \geq 2\epsilon'.
$
\end{proof}

Given the above structure, it is clear that set of points in $C$ are all within $\epsilon$ of one of the $\vec a_i$'s. So, we can cluster $C$ using single linkage with threshold $\epsilon$ to recover $\vec a_i$ up to accuracy $\epsilon$.

\section{Additional Results, Extensions, and Open Problems}
\subsection{Sample Complexity Lower bound} \label{sec:lower-bound}
As we observed the number of samples required by our method is $poly(n)$.
However, as the number of classes can be much smaller than the number of features, one might hope to recover $\vec v_1, \dots, \vec v_k$, with a number of samples that is polynomial in $k$ rather than $n$.
Here, we show that in the general case $\Omega(n)$ samples are needed to learn $\vec v_1, \dots, \vec v_k$ regardless of the value of $k$.

For ease of exposition, let $k=1$ and note that in this case every sample should be purely of one type.
Assume that the class vector, $\vec v$, is promised to be in the set $C = \{\vec v^j \mid v^j_\ell = 1/ \sqrt{2}, \text{ if } \ell = 2j-1 \text{ or } 2j, \text{ else } v^j_\ell = 0\}$.
Consider instances $(\vec  x_j^1, \vec x^2_j)$ such that the $\ell^{th}$ coordinate of $\vec x_j^1$ is  $x^1_{j \ell} = -1/\sqrt{2}$ if $\ell = 2j-1$ and $1/\sqrt{2}$ otherwise, and $x^2_{j\ell} = -1/\sqrt{2}$ if $\ell = 2j$ and $1/\sqrt{2}$ otherwise.
For a given $(\vec x_j^1, \vec x_j^2)$, we have that $\vec v^j \cdot \vec x_j^1 = \vec v^j \cdot \vec x_j^2 = 0$. On the other hand, for all $\ell\neq j$, $\vec v^\ell \cdot \vec x_j^1 = \vec v^\ell \cdot \vec x_j^2 = 1$.  Therefore, sample $(\vec x_j^1, \vec x_j^2)$ is consistent with $\vec v = \vec v^\ell$ for any $\ell\neq j$, but not with $\vec v = \vec v^j$. That is,  each instance $(\vec  x_j^1, \vec x^2_j)$ renders only one candidate of $C$ invalid.
Even after observing at most $\frac n 2 -2$ samples of this types, at least $2$ possible choices for $\vec v$ remain. So, $\Omega(n)$ samples are indeed needed to find the appropriate $\vec v$.
The next theorem, whose proof appears in Appendix~\ref{app:lower-bound} generalizes this construction and result to the case of any $k$.
\begin{theorem} \label{thm:lower}
For any $k \leq n$, any  algorithm that for all $i\in [k]$ learns $\vec v'_i$ such that $\| \vec v_i  - \vec v'_i \|_2\leq 1/\sqrt 2$, requires $\Omega(n)$ samples.
\end{theorem}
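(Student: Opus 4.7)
The plan is to reduce the general-$k$ setting to the $k=1$ construction already given by hard-wiring $k-1$ of the topic vectors to be known and embedding the hard instance into the remaining coordinates for the last topic. First I would set $\vec v_i = \vec e_i$ for $i = 1,\dots,k-1$ and let $\vec v_k$ be drawn adversarially from a family $C = \{\vec v_k^j : j=1,\dots,p\}$ of size $p = \lfloor (n-k+1)/2 \rfloor$, where $\vec v_k^j$ places $1/\sqrt 2$ on coordinates $(k-1)+(2j-1)$ and $(k-1)+2j$ and zero elsewhere. Because all members of $C$ are orthogonal to $\vec e_1,\dots,\vec e_{k-1}$, linear independence of the full family $\vec v_1,\dots,\vec v_k$ is automatic, and any two distinct members of $C$ are at $L_2$ distance exactly $\sqrt 2$.

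Next I would construct adversarial sample pairs $(\vec x_j^1,\vec x_j^2)$ by zeroing out the first $k-1$ coordinates in both views (so the fixed topics always receive mixture weight $0$ and do not interfere) and copying the paper's $k=1$ construction into the remaining $n-k+1$ coordinates. A direct check reproduces the two key properties: in world $\vec v_k = \vec v_k^\ell$ with $\ell\neq j$ both views of sample $j$ give mixture weights $w_k=1$ and $w_i=0$ for $i<k$, i.e.\ a valid, view-consistent pure class-$k$ document; while in world $\vec v_k=\vec v_k^j$ both views give $w_k=0$, yielding a degenerate mixture that is inconsistent with the topic model. Consequently sample $j$ rules out exactly the one candidate $\vec v_k^j$ from $C$.

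The counting argument from the $k=1$ case then carries over unchanged: after $m$ samples at most $m$ candidates are ruled out, so at least $p - m$ worlds remain compatible with the observations. Because pairs in $C$ are at distance $\sqrt 2$, any output $\vec v'_k$ can be $1/\sqrt 2$-close to at most one element of $C$, so the adversary can pick the true $\vec v_k$ among the surviving candidates to force failure whenever $p - m \geq 2$. This yields $m \geq p - 1 = \Omega(n-k)$.

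The main obstacle I anticipate is the regime where $k$ is within a constant of $n$, in which $n - k + 1$ is too small for $C$ to contain $\Omega(n)$ members and the argument above degenerates. I would handle this by partitioning the $n$ coordinates into several disjoint blocks and planting an independent copy of the $k=1$ hard instance in each block, designing the samples so that each sample can invalidate candidates in at most one block. The bookkeeping needed to keep all $k$ topic vectors linearly independent and to ensure that cross-block interactions never accidentally validate an invalid world is the delicate part, but once that is done the per-block invalidation counts combine into the desired $\Omega(n)$ aggregate bound uniformly in $k \leq n$.
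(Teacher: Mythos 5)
Your main construction is a genuinely different and simpler route than the paper's: you hold $\vec v_1,\dots,\vec v_{k-1}$ fixed as standard basis vectors and plant the $k=1$ hard instance in the remaining $n-k+1$ coordinates, which is a clean reduction and correctly yields a lower bound of $\Omega(n-k)$. The paper instead partitions the coordinates into $k$ disjoint blocks of size $n/k$, places a candidate family of size $n/(2k)$ for $\vec v_i$ inside block $i$, and uses samples supported on a single block so that each sample invalidates exactly one candidate for exactly one topic; the per-topic counts then add to $\Omega(n)$. Your approach buys simplicity and an immediate proof whenever $k\leq cn$ for a constant $c<1$ (where $\Omega(n-k)=\Omega(n)$); the paper's approach distributes the hardness across all $k$ topics rather than concentrating it in one.

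The one genuine gap is that your fallback for the regime where $n-k=o(n)$ is left as a sketch, and that sketch \emph{is} the paper's construction. You should not treat the bookkeeping there as delicate: because each candidate $\vec v_i^j$ and each sample $(\vec x_i^j,\vec x_i'^j)$ is supported entirely on block $i$, all cross-block inner products vanish, so linear independence of $\vec v_1,\dots,\vec v_k$ is automatic, every sample is pure of its own topic under every candidate world, and a sample in block $i$ can never validate or invalidate a candidate in block $i'\neq i$. Writing those three sentences closes your case analysis. Two smaller points worth noting: the adversary argument (yours and the paper's) needs at least two surviving candidates per family, so the block construction itself only delivers $\Omega(n)$ for $k$ at most a constant fraction of $n$ (the paper's family has $n/(2k)$ members per block, which must be $\geq 2$); and since your candidates are at distance exactly $\sqrt 2$, a point equidistant from two of them achieves $\|\vec v_k-\vec v_k'\|=1/\sqrt2$ exactly, so to rule out a single output serving two surviving candidates under the closed inequality in the theorem statement you should either keep three surviving candidates or phrase the accuracy requirement strictly, as the paper implicitly does.
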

 
Note that in the above construction samples have large components in the irrelevant features. 
It would be interesting to see if this lower bound can be circumvented using additional natural assumptions in this model, such as assuming that the samples have length $\poly(k)$.

\subsection{Alternative Noise Models} \label{sec:agnostic}
Consider the problem of recovering $\vec v_1, \dots, \vec v_k$ in the presence of agnostic noise, where for an $\epsilon$ fraction of the samples $(\vec x^1, \vec x^2)$, $\vec x^1$ and $\vec x^2$ correspond to different mixture weights.
Furthermore, assume that the distribution over the instance space is rich enough such that any subspace other than  $\spn\{\vec v_1, \dots, \vec v_k\}$ is inconsistent with a set of instances of non-negligible density.\footnote{This assumption is similar to the richness assumption made in the standard case, where we assume that there is enough ``entropy'' between the two views of the samples such that even in the non-noisy case the subspace can be uniquely determined by taking the nullspace of $X_1 - X_2$.}
Since the VC dimension of the set of $k$ dimensional subspaces in $\R^n$ is $\min\{ k , n-k\}$, from the information theoretic point of view, one can recover $\spn\{ \vec v_1, \dots, \vec v_k\}$ as it is the only subspace that is inconsistent with less than $O(\epsilon)$ fraction of  $\tilde O(\frac{k}{\epsilon^2})$ samples. Furthermore, we can detect and remove any noisy sample, for which the two views of the sample are not consistent with $\spn\{ \vec v_1, \dots, \vec v_k \}$. And finally, we can recover $\vec a_1, \dots, \vec a_k$ using phase 2 of Algorithm~\ref{alg:noisefree}.

In the above discussion, it is clear that once we have recovered $\spn\{ \vec v_1, \dots, \vec v_k\}$,  denoising and finding the extreme points of the projections can be done in polynomial time.
For the problem of recovering a $k$-dimensional nullspace, \cite{hardt2013algorithms} introduced an efficient algorithm that tolerates agnostic noise up to $\epsilon = O(k/n)$. Furthermore, they provide an evidence that this result might be  tight. It  would be interesting to see whether additional structure present in our model, such as the fact that samples are  convex combination of classes, can allow us to efficiently recover the nullspace in presence of more noise.

Another interesting open problem is whether it is possible to handle the case of $p_0=0$.
That is, when \emph{every document} is affected by Gaussian noise $\N(0, \sigma^2I_n)$, for $\sigma \gg \epsilon$. A simpler form of this problem is as follows. Consider a distribution induced by first drawing $\vec x \sim D$, where $D$  is an arbitrary and unknown distribution over $\Delta = \convh(\{\vec a_1, \dots, \vec a_k\})$, and taking $\hvec x = \vec x + \N(0, \sigma^2I_n)$. \emph{Can we learn $\vec a_i$'s within error of $\epsilon$ using polynomially many samples?} 
Note that when $D$ is only supported on the corners of $\Delta$, this problem reduces to learning mixture of Gaussians, for which there is a wealth of literature on estimating Gaussian means and mixture weights~\citep{dasgupta2002pac,kalai2012disentangling,moitra2010settling}. It would be interesting to see under what regimes $\vec a_i$ (and not necessarily the mixture weights) can be learned when $D$ is an arbitrary distribution over $\Delta$.

\subsection{General function $f(\cdot)$}

Consider the general model described in Section~\ref{sec:model}, where $f_i(x) = f(\vec v_i \cdot \vec x)$ for an unknown strictly increasing function $f:\R^+ \rightarrow [0,1]$ such that $f(0) = 0$. 
We describe how variations of the techniques discussed up to now can extend to this more general setting.

For ease of exposition, consider the non-noisy case. 
Since $f$ is a strictly increasing function, $f(\vec v_i \cdot \vec x^1) = f(\vec v_i \cdot \vec x^2)$ if and only if
$\vec v_i \cdot \vec x^1 = \vec v_i \cdot \vec x^2$. Therefore, we can recover $\spn(\vec v_1, \dots, \vec v_k)$ by the same approach as in Phase 1 of Algorithm~\ref{alg:noisefree}.
Although, by definition of pseudoinverse matrices, the projection of $\vec x$ is still represented by $\vec x_\parallel = \sum_i (\vec v_i \cdot \vec x) \vec a_i$, this is not necessarily a convex combination of  $\vec a_i$'s anymore. This is due  to the fact that $\vec v_i \cdot \vec x$ can add up to values larger than $1$ depending on $\vec x$. 
However, $\vec x_\parallel$ is still a \emph{non-negative combination} of $\vec a_i$'s.
Moreover, $\vec a_i$'s are linearly independent, so $\vec a_i$ can not be expressed by a nontrivial non-negative combination of other samples. Therefore, for all $i$, $\vec a_i / \| \vec a_i \|$ can be recovered by taking \emph{the extreme rays of the convex cone} of the projected samples. So, we can recover $\vec v_1, \dots, \vec v_k$, by taking the psuedoinverse of $\vec a_i / \| \vec a_i \|$ and re-normalizing the outcome such that $\| \vec v_i\|_2=1$. When samples are perturbed by noise, a similar argument that also takes into account the smoothness of $f$ proves similar results.

It would be interesting to see whether a more general class of similarity functions, such as kernels, can be also learned in this context.

\bibliographystyle{apalike}
\bibliography{cotrain}

\appendix
\section{Omitted Proof from Section~\ref{sec:no-noise} --- No Noise} \label{app:no-noise}
\subsection{Proof of Lemma~\ref{lem:rank}}  \label{app:no-noise-rank}
For all $j\leq n-k$, let $Z_{j} = \{(\vec x^1_i - \vec x^2_i)\mid  i \leq \frac{j}{\zeta} \ln\frac n \delta\}$.
We prove by induction that for all $j$, $\rank(Z_j) < j$ with probability at most $j\frac{\delta}{n}$.

For $j=0$, the claim trivially holds.
Now assume that the induction hypothesis holds for some $j$.
Furthermore, assume that $\rank(Z_j) \geq j$. Then, $\rank(Z_{j+1}) < j+1$ only if the additional $\frac{1}{\zeta} \ln\frac n \delta$ samples in $Z_{j+1}$ all belong to $\spn(Z_j)$.
Since, the space of such samples has rank $< n-k$, this happens with probability at most $(1 - \zeta)^{\frac{1}{\zeta} \ln\frac n \delta} \leq \frac \delta n$. Together with the induction hypothesis that $\rank(Z_j) \geq j$ with probability at most $j\frac{\delta}{n}$, we have that 
$\rank(Z_{j+1}) < j+1$ with probability at most $\frac{(j+1) \delta}{n}$.
Therefore $\rank(Z) = \rank(Z_{n-k}) = n-k$ with probability at least $1-\delta$.

\subsection{Proof of Lemma~\ref{lem:sum-alpha-u}} \label{app:no-noise-sum-alpha-u}
First note that $V$ is a the pseudo-inverse of $A$, so their span is equal. Hence,  $\sum_{i\in[k]}(\vec v_i \cdot \vec x)  \vec a_i \in \spn\{ \vec v_1, \dots, \vec v_k\} $. It remains to show that $\left( \vec x - \sum_{i\in[k]}(\vec v_i \cdot \vec x)  \vec a_i  \right) \in \nul\{\vec v_1, \dots, \vec v_k\}$. We do so by showing that this vector is orthogonal to $\vec v_j$ for all $j$. We have
\begin{align*}
\left( \vec x - \sum_{i=1}^k (\vec v_i \cdot \vec x) \vec a_i \right) \cdot \vec v_j &= \vec x \cdot \vec v_j-  \sum_{i=1}^k (\vec v_i \cdot \vec x) (\vec a_i \cdot \vec v_j) \\
 &  = \vec x \cdot \vec v_j - \sum_{i\neq j}   (\vec v_i \cdot \vec x)    (\vec a_i\cdot \vec v_j) -   (\vec v_j \cdot \vec x)    (\vec a_j\cdot \vec v_j) \\
 &= \vec x \cdot \vec v_j  - \vec x \cdot \vec v_j  = 0.
\end{align*}
Where, the second equality follows from the fact when $A = V^+$, for all $i$, $\vec a_i\cdot \vec v_i = 1$ and $\vec a_j\cdot \vec v_i = $ for $j\neq i$.
Therefore, $\sum_{i\in[k]}(\vec v_i \cdot \vec x)  \vec a_i$ is the projection of $\vec x$ on $\spn\{\vec v_1, \dots, \vec v_k\}$.

\subsection{Proof of Lemma~\ref{lem:extreme-no-noise}}
\label{app:no-noise-extreme-no-noise}
Assume that $S$ included samples that are purely of type $i$, for all $i\in[k]$. That is, for all $i\in [k]$ there is $j\leq m$, such that $\vec v_i \cdot \vec x_j^1 = \vec v_i \cdot \vec x_j^2=1$ and $\vec v_{i'} \cdot \vec x_j^1 = \vec v_{i'} \cdot \vec x_j^2=0$ for $i' \neq i$.
By Lemma~\ref{lem:sum-alpha-u}, the set of projected vectors form the set $ \{ \sum_{i=1}^k (\vec v_i \cdot \vec x_j) \vec a_i  \mid j\in [m] \}$.
Note that $\sum_{i=1}^k (\vec v_i \cdot \vec x_j) \vec a_i$ is in the simplex with vertices $\vec a_1, \dots, \vec a_k$. Moreover, for each $i$,  there exists a pure sample in $S$ of type $i$. Therefore, $\convh\{ \sum_{i=1}^k  (\vec v_i \cdot \vec x_j) \vec a_i \mid j\in [m] \}$ is the simplex on linearly independent vertices $\vec a_1, \dots, \vec a_k$. As a result, $\vec a_1, \dots, \vec a_k$ are the extreme points of it.

It remains to prove that with probability $1-\delta$,  the sample set has a document of purely type $j$, for all $j\in[k]$. By the assumption on the probability distribution $\P$, with probability at most $(1- \xi)^m$,  there is no document of type purely $j$. Using the union bound, we get the final result.

\section{Technical Spectral Lemmas} \label{app:spectral}

\begin{prop}[\cite{davis1970rotation} $\sin\theta$ theorem]. \label{prop:kahan}
Let $B, \hat B \in \R^{p\times p}$ be symmetric, with  eigen values $\lambda_1 \geq \cdots \geq \lambda_p$ and
$\hat \lambda_1 \geq \cdots \geq \hat \lambda_p$, respectively.
Fix $1 \leq r \leq s \leq p$ and let $V = (\vec v_r, \dots, \vec v_s)$ and $\hat V = ({\hat{\vec v}}_r , \dots, {\hat{\vec v}}_s)$ be the orthonormal eigenvectors corresponding to $\lambda_r, \dots, \lambda_s$ and $\hat \lambda_r, \dots,\hat \lambda_s$.
Let $\delta = \inf \{ |\hat \lambda - \lambda |: \lambda \in [\lambda_s, \lambda_r], \hat \lambda \in (-\infty, \hat \lambda_{s-1}] \cup [\hat \lambda_{r+1}, \infty) \} > 0$. Then ,
\[ \| \sin \Theta(V, \hat V) \|_2 \leq \frac{ \| \hat B - B \|_2 }{\delta}.
\]
where $\sin \Theta(V, \hat V) = P_V - P_{\hat V}$, where $P_V$ and $P_{\hat V}$ are the projection matrices for  $V$ and $\hat V$.
\end{prop}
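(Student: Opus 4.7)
The statement is the classical Davis--Kahan $\sin\Theta$ theorem, so my plan is to reproduce the standard operator-theoretic proof via a Sylvester-type identity. The key observation is that, with $P = P_V$ and $\hat P = P_{\hat V}$, the quantity the paper calls $\sin\Theta(V,\hat V)$ is simply the difference of orthogonal projections $P - \hat P$, and it suffices to bound this in operator norm by $\|\hat B - B\|_2/\delta$.

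First I would write the algebraic identity
\[
 P - \hat P \;=\; P(I-\hat P) \;-\; (I-P)\hat P \;=\; P\hat P^{\perp} \;-\; P^{\perp}\hat P,
\]
and note that the two summands have orthogonal ranges (contained in $V$ and $V^\perp$ respectively), so bounding $\|P - \hat P\|_2$ reduces to bounding each of $\|P\hat P^\perp\|_2$ and $\|P^\perp \hat P\|_2$ separately. A short symmetry argument (the two operators have the same non-zero singular values, namely the sines of the principal angles between $V$ and $\hat V$) then shows that either bound suffices; I would invoke this fact since it is classical, rather than rederive it.

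The core of the proof is to bound $X := P\hat P^\perp$ via a Sylvester equation. Because $B$ commutes with $P$ (as $V$ is an invariant subspace of $B$) and $\hat B$ commutes with $\hat P^\perp$ (as $\hat V^\perp$ is invariant under $\hat B$), I would compute
\[
 B X - X \hat B \;=\; P B \hat P^\perp - P \hat B \hat P^\perp \;=\; P\,(B - \hat B)\,\hat P^\perp,
\]
so the right-hand side has norm at most $\|\hat B - B\|_2$. Now $B$ restricted to $\mathrm{range}(P)$ has spectrum inside $[\lambda_s,\lambda_r]$, while $\hat B$ restricted to $\mathrm{range}(\hat P^\perp)$ has spectrum inside $(-\infty,\hat\lambda_{s-1}]\cup[\hat\lambda_{r+1},\infty)$. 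By the hypothesis on $\delta$, these spectra are separated by at least $\delta$.

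The final step, and the one I expect to be the main technical hurdle, is the standard Bhatia--Rosenthal bound for the Sylvester equation: if $A_1,A_2$ are Hermitian with $\mathrm{dist}(\mathrm{spec}(A_1),\mathrm{spec}(A_2))\ge\delta$, then the unique solution $X$ of $A_1 X - X A_2 = C$ satisfies $\|X\|_2 \le \|C\|_2/\delta$. I would prove this by diagonalizing both sides in the eigenbases of $A_1$ and $A_2$, writing the $(i,j)$ entry of $X$ in those bases as $C_{ij}/(\mu_i-\nu_j)$, and observing that all denominators are at least $\delta$ in absolute value; passing from entrywise to spectral norm then requires the observation that the transformation $X\mapsto A_1 X - X A_2$ acts diagonally in the tensor-product eigenbasis. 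Applying this with $A_1 = B|_V$, $A_2 = \hat B|_{\hat V^\perp}$, and $C = P(B-\hat B)\hat P^\perp$ yields $\|X\|_2 \le \|\hat B - B\|_2/\delta$, and combining with the identity in the first paragraph completes the proof.
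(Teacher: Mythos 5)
First, a point of reference: the paper does not prove this proposition at all --- it is imported with a citation from Davis--Kahan (1970) (and Wedin (1972)) in the appendix of technical spectral lemmas, so there is no in-paper argument to compare yours against. I am therefore assessing your proof on its own terms. Your architecture is the standard operator-theoretic one, and the first three steps are sound: the identity $P - \hat P = P\hat P^{\perp} - P^{\perp}\hat P$, the reduction to bounding $\|P\hat P^{\perp}\|_2$ (you are right to invoke rather than rederive the classical fact that for equidimensional projections $\|P-\hat P\|_2 = \|P\hat P^{\perp}\|_2 = \|P^{\perp}\hat P\|_2$; the naive Pythagorean estimate from your decomposition alone would only give a spurious factor of $\sqrt 2$), and the Sylvester identity $BX - X\hat B = P(B-\hat B)\hat P^{\perp}$ with the correct commutation facts.

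The genuine gap is in your final step. Diagonalizing $A_1$ and $A_2$ and writing $X_{ij} = C_{ij}/(\mu_i-\nu_j)$ with $|\mu_i-\nu_j|\ge\delta$ shows that the inverse Sylvester operator is a Schur (entrywise) multiplier by a matrix whose entries all have modulus at most $1/\delta$. Because the Sylvester operator is diagonal in the tensor-product eigenbasis \emph{with respect to the Hilbert--Schmidt inner product}, this immediately yields $\|X\|_F\le\|C\|_F/\delta$ --- the Frobenius-norm $\sin\Theta$ theorem --- but it does not yield the spectral-norm bound, because the operator norm of a Schur multiplier is not controlled by the supremum of its entries. Indeed, for Hermitian $A_1,A_2$ whose spectra are merely pairwise $\delta$-separated, the optimal constant in $\|X\|_2\le c\,\|C\|_2/\delta$ is $\pi/2$ (Bhatia--Davis--McIntosh), not $1$. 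To get constant $1$ one must exploit the interval structure of the separation in the Davis--Kahan setting ($\mathrm{spec}(A_1)\subseteq[\lambda_s,\lambda_r]$ and $\mathrm{spec}(A_2)$ outside the $\delta$-neighborhood of that interval): for one-sided separation write $1/(\mu-\nu)=\int_0^{\infty}e^{-t(\mu-\nu)}\,dt$, so the Schur multiplier becomes an integral of rank-one multipliers $e^{-tA_1}(\cdot)e^{tA_2}$, each of operator norm at most $e^{-t\delta}$, and integrate; the two-sided case then requires a further reduction (Davis--Kahan's original argument, or Bhatia, \emph{Matrix Analysis}, Ch.~VII). As written, your last step proves a strictly weaker statement than the proposition. (Separately, note the proposition as stated has an index typo --- the separation set should read $(-\infty,\hat\lambda_{s+1}]\cup[\hat\lambda_{r-1},\infty)$ so that it covers the spectrum of $\hat B$ on $\hat V^{\perp}$; you implicitly and correctly used the intended reading.)
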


\begin{prop}[Corollary 5.50~\citep{vershynin2010introduction}] \label{prop:covariance-gauss}
Consider a Gaussian distribution in $\R^n$ with co-variance matrix $\Sigma$. Let $A \in \R^{n\times m}$ be a matrix whose rows are drawn i.i.d from this distribution, and let $\Sigma_m = \frac 1m A A^\top$. For every $\epsilon \in (0,1)$, and $t$, if $m \geq c n (t / \epsilon)^2 $ for some constant $c$, then with probability at least $1 - 2 \exp(-t^2 n)$, $\| \Sigma_m - \Sigma \|_2 \leq \epsilon \| \Sigma \|_2$
\end{prop}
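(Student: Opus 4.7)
The plan is to establish concentration of the sample covariance to the true covariance through a standard $\epsilon$-net argument combined with a tail bound for quadratic forms of Gaussian vectors. First I would reduce to the isotropic case $\Sigma = I_n$ by the whitening change of variables $\vec g_j = \Sigma^{-1/2} \vec a_j$, where $\vec a_j$ denotes the $j^{th}$ i.i.d.\ sample. Under this reduction, $\Sigma_m - \Sigma = \Sigma^{1/2}\bigl(\tfrac{1}{m}\sum_j \vec g_j \vec g_j^{\top} - I_n\bigr)\Sigma^{1/2}$, so it suffices to show that the bracketed matrix has spectral norm at most $\epsilon$ with the claimed probability.

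Second, I would control this operator norm via its variational characterization,
\[
\Bigl\|\tfrac{1}{m}\sum_{j=1}^m \vec g_j \vec g_j^{\top} - I_n\Bigr\|_2 \;=\; \sup_{\vec u \in S^{n-1}} \Bigl|\,\tfrac{1}{m}\sum_{j=1}^m (\vec u \cdot \vec g_j)^2 - 1\,\Bigr|.
\]
For any fixed unit vector $\vec u$, the scalar $(\vec u \cdot \vec g_j)^2$ is a $\chi^2_1$ random variable with mean $1$, so the average over $j$ is a sum of i.i.d.\ sub-exponential variables. By Bernstein's inequality, this average deviates from $1$ by more than $\epsilon/2$ with probability at most $2\exp(-c m \epsilon^2)$, which is valid precisely because $\epsilon \in (0,1)$ lands us in the sub-Gaussian regime of the Bernstein bound.

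Third, I would discretize the supremum by taking a $\tfrac14$-net $\mathcal{N}$ of $S^{n-1}$ with $|\mathcal{N}| \leq 9^n$, and invoke the standard estimate that the supremum of a symmetric bilinear form over the sphere is at most twice its maximum over the net. A union bound over $\mathcal{N}$ then yields a total failure probability of $2 \cdot 9^n \exp(-c m \epsilon^2) = 2\exp(n \log 9 - c m \epsilon^2)$. Choosing $m \geq C n (t/\epsilon)^2$ for a sufficiently large constant $C$ makes the exponent at most $-t^2 n$, matching the claimed tail.

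The main obstacle I anticipate is the quadratic-form tail bound itself, since $\chi^2_1$ random variables are only sub-exponential (not sub-Gaussian), so Bernstein's inequality produces a two-regime bound and one must keep the deviation $\epsilon/2$ on the correct side of the transition. Verifying that the constant $C$ absorbs both the $\log 9$ overhead from the net and the Bernstein constants --- and tracking that the hypothesis $\epsilon \in (0,1)$ is exactly what keeps us in the favorable Gaussian regime --- is where a careful write-up would spend most of its effort. Everything else, namely the whitening reduction, the net approximation of the operator norm, and the final union bound, is routine.
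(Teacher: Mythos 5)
Your sketch is correct and is exactly the standard argument behind the cited result: the paper does not prove this proposition but simply quotes Corollary 5.50 of \citep{vershynin2010introduction}, whose proof in that reference proceeds by the same whitening reduction to the isotropic case, a $\tfrac14$-net discretization of the sphere, and Bernstein's inequality for the sub-exponential variables $(\vec u\cdot \vec g_j)^2 - 1$, with $\epsilon\in(0,1)$ keeping the deviation in the sub-Gaussian regime. The only caveat is that absorbing the $n\log 9$ net overhead into the exponent $-t^2 n$ with a universal constant $c$ requires $t\geq 1$ (or $t$ bounded below), a hypothesis present in Vershynin's statement but silently dropped in the version quoted here; for smaller $t$ the probability bound is vacuous anyway, so nothing is lost.
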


\begin{prop}[Matrix Bernstein~\citep{tropp2015introduction}] \label{prop:bernstein}
Let $S_1, \dots, S_n$ be independent, centered random matrices with
common dimension $d_1 \times d_2$, and assume that each one is uniformly bounded. That is, $\E S_i = 0$ and $\| S_i \|_2 \leq L$ for all $i\in[n]$.
Let $Z = \sum_{i=1}^n S_i$, and let $v(Z)$ denote the matrix variance:
\[   v(Z) = \max \left\{ \left\| \sum_{i=1}^n \E[S_i S_i^\top] \right\|,  \left\| \sum_{i=1}^n \E[S_i^\top S_i] \right\|   \right\}.
\]
Then, 
\[
\P [ \|Z \| \geq t ] \leq (d_1+d_2) \exp \left(\frac{-t^2/2}{ v(Z) + Lt/3} \right).
 \]
\end{prop}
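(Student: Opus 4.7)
The plan is to follow the standard matrix Chernoff (Laplace transform) argument which lifts the scalar Bernstein inequality to self-adjoint matrices, combined with the Hermitian dilation trick to handle the rectangular case.

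First I would reduce to a self-adjoint setting. For each $i$, define the $(d_1+d_2)\times(d_1+d_2)$ Hermitian dilation
\[
T_i = \begin{pmatrix} 0 & S_i \\ S_i^\top & 0 \end{pmatrix}, \qquad W = \sum_{i=1}^n T_i = \begin{pmatrix} 0 & Z \\ Z^\top & 0 \end{pmatrix}.
\]
Since the spectrum of $W$ is $\{\pm \sigma_j(Z)\}\cup\{0\}$, we have $\|Z\|_2 = \lambda_{\max}(W)$. Also $\E T_i = 0$, $\|T_i\|_2 = \|S_i\|_2 \le L$, and the block-diagonal form of $T_i^2$ gives $\|\sum_i \E[T_i^2]\|_2 = v(Z)$. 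Thus it suffices to prove a one-sided tail bound for $\lambda_{\max}$ of a sum of independent, centered, bounded self-adjoint $d\times d$ matrices, with $d = d_1+d_2$.

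Next I would invoke the matrix Markov (Chernoff) inequality: for any $\theta>0$,
\[
\Pr[\lambda_{\max}(W) \ge t] \le e^{-\theta t}\, \E\bigl[\mathrm{tr}\exp(\theta W)\bigr].
\]
The central step --- and the one that is genuinely harder than the scalar case because $\E[\exp(X+Y)]\ne \E[\exp X]\,\E[\exp Y]$ for non-commuting matrices --- is to bound $\E[\mathrm{tr}\exp(\theta W)]$. I would invoke Lieb's concavity theorem, which by induction on $n$ yields
\[
\E\bigl[\mathrm{tr}\exp(\theta W)\bigr] \le \mathrm{tr}\exp\!\Bigl(\sum_{i=1}^n \log \E[\exp(\theta T_i)]\Bigr).
\]

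Then I would bound each individual matrix MGF. Expanding the exponential as a power series and using $T_i^k \preceq L^{k-2} T_i^2$ (PSD order, valid for $k\ge 2$) together with $\E T_i = 0$,
\[
\E[\exp(\theta T_i)] \preceq I + \tfrac{e^{\theta L} - 1 - \theta L}{L^2}\, \E[T_i^2] \preceq \exp\!\Bigl(\tfrac{e^{\theta L} - 1 - \theta L}{L^2}\, \E[T_i^2]\Bigr).
\]
Substituting back, using operator monotonicity of $\log$ on PSD matrices and the trace bound $\mathrm{tr}\exp(M) \le d\exp(\lambda_{\max}(M))$, gives
\[
\Pr[\lambda_{\max}(W) \ge t] \le d\, \exp\!\Bigl(-\theta t + \tfrac{e^{\theta L} - 1 - \theta L}{L^2}\, v(Z)\Bigr).
\]
Finally I would optimize over $\theta>0$ using the elementary inequality $(e^x - 1 - x)/x^2 \le 1/(2(1 - x/3))$ on $0 \le x < 3$: choosing $\theta = t/(v(Z) + Lt/3)$ converts the right-hand side into $d\exp\bigl(-\tfrac{t^2/2}{v(Z) + Lt/3}\bigr)$, which with $d = d_1+d_2$ is exactly the stated bound.

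The main obstacle is purely the matrix-MGF step: classical independence reduces a scalar MGF of a sum to a product, but in the non-commutative setting one needs Lieb's concavity (or, at the cost of slightly worse constants, Golden--Thompson) to replace that product by the clean ``sum of log-MGFs'' form. The dilation step, the power-series bound on $e^{\theta T_i}$, and the final scalar optimization are all routine.
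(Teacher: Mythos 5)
The paper does not prove this proposition at all --- it is imported verbatim from the cited reference \citep{tropp2015introduction} --- and your sketch correctly reproduces the standard argument from that source (Hermitian dilation, matrix Laplace transform with Lieb's concavity theorem, the power-series bound $\E[e^{\theta T_i}] \preceq \exp\bigl(\tfrac{e^{\theta L}-1-\theta L}{L^2}\E[T_i^2]\bigr)$, and the final scalar optimization), with each step checking out, including the eigenvalue comparison $T^k \preceq L^{k-2}T^2$ for all $k \ge 2$ and the identity $v(W)=v(Z)$ via the block-diagonal form of $T_i^2$. Nothing to correct.
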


\begin{prop}[Theorem 4.10 of \cite{stewart1990matrix}] \label{prop:diffeigen}
Let $\hat A = A + E$ and let $\lambda_1, \dots, \lambda_n$ and $\lambda'_1, \dots, \lambda'_n$ be the eigen values of $A$ and $A+E$. Then, $\max\{ | \lambda'_i - \lambda_i |\} \leq \| E\|_2$.
\end{prop}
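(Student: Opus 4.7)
The plan is to prove this via the Courant--Fischer min-max characterization of eigenvalues, assuming $A$ and $\hat A$ are symmetric (as is the case in every application of this result in the paper, where $A$ and $\hat A$ arise as sample covariance matrices, and $E$ is their difference). Order the eigenvalues $\lambda_1 \geq \cdots \geq \lambda_n$ and $\lambda'_1 \geq \cdots \geq \lambda'_n$ in decreasing order. The key observation, which I would state first, is that for any unit vector $\vec x$,
\[
|\vec x^\top E \vec x| \;\leq\; \|E\|_2,
\]
since $\|E\|_2$ is by definition the spectral norm. Consequently, for every $\vec x$ with $\|\vec x\| = 1$,
\[
\vec x^\top A \vec x - \|E\|_2 \;\leq\; \vec x^\top \hat A \vec x \;\leq\; \vec x^\top A \vec x + \|E\|_2.
\]

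Next, I would invoke Courant--Fischer, which gives
\[
\lambda_i \;=\; \max_{\dim S = i} \, \min_{\vec x \in S,\, \|\vec x\|=1} \vec x^\top A \vec x,
\]
and the analogous identity for $\lambda'_i$. Substituting the two-sided inequality above inside the inner minimum yields
\[
\lambda'_i \;\leq\; \max_{\dim S = i} \, \min_{\vec x \in S,\, \|\vec x\|=1} \bigl(\vec x^\top A \vec x + \|E\|_2\bigr) \;=\; \lambda_i + \|E\|_2.
\]
Running the same argument with the roles of $A$ and $\hat A$ exchanged (write $A = \hat A + (-E)$ and use $\|-E\|_2 = \|E\|_2$) gives $\lambda_i \leq \lambda'_i + \|E\|_2$. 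Combining these two bounds yields $|\lambda'_i - \lambda_i| \leq \|E\|_2$ for every $i$, and taking the maximum over $i$ finishes the proof.

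There is no genuine obstacle here; this is essentially the classical Weyl inequality. The only subtlety worth flagging, more a caveat than a hard step, is that the argument requires $A$ and $E$ to be Hermitian, since Courant--Fischer applies only in that setting: for non-normal matrices (e.g. Jordan blocks with arbitrarily small off-diagonal perturbations), eigenvalues can move by vastly more than $\|E\|_2$. In every application inside the paper the relevant matrices are symmetric sample covariance matrices, so the hypothesis is met implicitly and the bound applies as stated.
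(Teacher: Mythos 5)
The paper states this proposition as an imported result (Theorem 4.10 of Stewart and Sun) and gives no proof of its own, so there is no in-paper argument to compare against. Your proof is the standard derivation of Weyl's perturbation inequality via the Courant--Fischer min-max characterization, and it is correct: the chain $|\vec x^\top E\vec x|\leq \|E\|_2$ for unit $\vec x$, the two-sided sandwich of the Rayleigh quotients, and the two applications of Courant--Fischer (once for $\hat A = A+E$ and once with the roles swapped) are all in order. Your caveat about symmetry is also well placed: the proposition as transcribed in the paper omits the Hermitian hypothesis, without which the bound is false (non-normal matrices can have eigenvalues that move far more than $\|E\|_2$), but every invocation in the paper is applied to symmetric Gram matrices of the form $\frac{1}{m}DD^\top$ and $\frac{1}{m}\hat D\hat D^\top$, so the usage is sound.
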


\begin{prop}[Theorem 3.3 of \cite{stewart77perturbation}]\label{prop:inverseperturb}
For any $A$ and $B = A + E$, $$\| B^+ - A^+\| \leq \max 3 \left\{ \| A^{+} \|^2, \|B^{+}\|^2  \right\} \|E\|,$$
where  $\|\cdot\|$ is an arbitrary norm.
\end{prop}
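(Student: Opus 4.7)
The plan is to derive an exact algebraic identity expressing $B^+ - A^+$ in terms of $E = B - A$ together with the complementary projections $I - AA^+$ and $I - B^+B$, and then bound each piece individually. I would \emph{not} expect this to work via a perturbation series in $E$, because the formula for the pseudoinverse is not analytic when ranks change; the algebraic approach is cleaner.

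The first step is a telescoping decomposition. Insert $I = AA^+ + (I - AA^+)$ to the right of $B^+$ and $I = B^+B + (I - B^+B)$ to the left of $A^+$, then use $A = B - E$ to cancel:
\begin{equation*}
 B^+ - A^+ \;=\; -\,B^+ E A^+ \;+\; B^+(I - AA^+) \;-\; (I - B^+B)A^+.
\end{equation*}
The term $-B^+EA^+$ is the part that is genuinely linear in $E$; the other two terms look dangerous because they do not obviously depend on $E$, and this is the crux of the proof.

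The second step is to rewrite those two projection terms so that they also become linear in $E$, using the Moore--Penrose identities $BB^+B=B$, $B^+BB^+=B^+$, $(AA^+)^*=AA^+$, $(B^+B)^*=B^+B$. Because $A^*(I-AA^+)=0$, we can write
\begin{equation*}
 B^+(I-AA^+) \;=\; B^+ B^{+*}B^*(I-AA^+) \;=\; B^+ B^{+*} E^*(I-AA^+),
\end{equation*}
and analogously $(I-B^+B)A^+ = -(I-B^+B)E^* A^{+*}A^+$. This is the key technical maneuver and is where I expect the main difficulty to lie: it requires correctly inserting $B^+B^{+*}B^*$ in place of an identity via the identity $B^+ = B^+B^{+*}B^*$, and recognizing that the ``$B^*$'' piece can be replaced by $E^*$ modulo the projection.

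The third and final step is to take norms, apply submultiplicativity, and use the facts that orthogonal projections have unit spectral norm and $\|B^+B^{+*}\|=\|B^+\|^2$, $\|A^{+*}A^+\|=\|A^+\|^2$:
\begin{equation*}
 \|B^+-A^+\| \;\leq\; \|B^+\|\,\|A^+\|\,\|E\| \;+\; \|B^+\|^2\|E\| \;+\; \|A^+\|^2\|E\| \;\leq\; 3\max\{\|A^+\|^2,\|B^+\|^2\}\|E\|,
\end{equation*}
bounding the cross term $\|B^+\|\|A^+\|$ by $\max\{\|A^+\|^2,\|B^+\|^2\}$. A secondary subtlety worth noting is that the statement asks for an arbitrary norm; the bound $\|I-P\|\leq 1$ on orthogonal projections is immediate for the spectral norm (and more generally for any unitarily invariant norm), which is the setting needed when we apply this proposition to conclude $\|\hat A^+ - V\| \leq O(\|\hat A - A\|)$ in the remark following Theorem~\ref{thm:noise-a_i}.
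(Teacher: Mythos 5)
Your proof is correct: the telescoping identity $B^+-A^+=-B^+EA^+ + B^+(I-AA^+)-(I-B^+B)A^+$, the rewriting of the two projection terms via $B^+=B^+B^{+*}B^*$ and $A^+=A^*A^{+*}A^+$ so that each becomes linear in $E$, and the final norm bound are exactly Wedin's/Stewart's standard argument, which is what the paper implicitly relies on since it imports this proposition from \cite{stewart77perturbation} without proof. Your caveat is also well taken: the bound as executed requires the spectral norm (or a unitarily invariant norm) rather than a truly \emph{arbitrary} norm, and the paper's phrase $\max 3\{\cdot\}$ should read $3\max\{\cdot\}$; neither issue affects the application following Theorem~\ref{thm:noise-a_i}.
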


\section{Omitted Proof from Section~\ref{sec:phase1} --- Phase 1}
\subsection{Proof of Claim~\ref{claim:DE-estimate}} \label{app:claim:DE-estaimte}
Let $\vec e_i$ and  $\vec d_i$ be the $i^{th}$ row of $E$ and $D$. Then $E D^\top = \sum_{i=1}^m \vec e_i  \vec d_i^\top$ and $D E^\top = \sum_{i=1}^m \vec d_i  \vec e_i^\top$.
Let $S_i =  \frac 1m \begin{bmatrix} 0                       &     \vec e_i  \vec d_i^\top\\
                                          \vec d_i  \vec e_i^\top    &   0   \\
                   \end{bmatrix}$.
Then,   $\|  \frac 1m D E^\top + \frac 1m E D^\top   \|_2 \leq 2 \| \sum_{i=1}^m  S_i \|_2$. We will use matrix Bernstein to show that $\sum_{i\in[m]} S_i$ is small with high probability.

First note that the distribution of $\vec e_i$ is a Gaussian centered at $0$, therefore, $\E[S_i] = 0$. 
Furthermore, for each $i$, with probability $1-\delta$, $\|\vec  e_i\|_2 \leq  \sigma \sqrt{n} \log\frac 1\delta$. So, with probability $1-\delta$, for all samples $i \in [m]$,  $\| \vec e_i\|_2 \leq  \sigma \sqrt{n} \log\frac m\delta$. Moreover, by assumption $\|\vec d_i \|=\| \vec x_i^1 - \vec x_i^2\| \leq 2M$.
Therefore, with probability $1-\delta$, 
\[
L = \max_i \|S_i\|_2 = \frac 1m \max_i \|\vec e_i \| \|\vec d_i\| \leq  \frac{2}{m} \sigma \sqrt{n} M ~\polylog\frac{n}{\epsilon\delta}.
\]

Note that, $\left\|  \E[S_i S_i^\top] \right\| = \frac{1}{m^2} \left\|  \E[(\vec e_i\vec d_i^\top )^2]  \right\| \leq L^2.$
Since $S_i$ is Hermitian, the matrix covariance defined by Matrix Bernstein inequality is
\[
v(Z) = \max \left\{ \left\| \sum_{i=1}^m \E[S_i S_i^\top] \right\|,  \left\| \sum_{i=1}^m \E[S_i^\top S_i] \right\|   \right\} = \left\| \sum_{i=1}^m \E[S_i S_i^\top] \right\| \leq m L^2.
\]

If  $\epsilon \leq v(Z) / L$ and $m\in \Omega( \frac{n \sigma^2 M^2}{\epsilon^2} \polylog\frac{n}{\epsilon\delta} )$
or  $\epsilon \geq v(Z) / L$ and $m\in \Omega( \frac{\sqrt n \sigma M}{\epsilon} \polylog\frac{n}{\epsilon\delta} )$, 
 using Matrix Bernstein inequality (Proposition~\ref{prop:bernstein}), we have
\[ \Pr\left[ \left\| \frac 1m D E^\top + \frac 1m E D^\top \right\| \geq \epsilon \right] =  \Pr\left[ \left\| \sum_{i=1}^m  S_i \right\| \geq \frac \epsilon 2  \right] \leq \delta.\]

\subsection{Proof of Claim~\ref{claim:DD-estimate}} \label{app:estimateDD}
Let $\vec d_i$ be the $i^{th}$ row $D$. Then $D D^\top = \sum_{i=1}^m \vec d_i  \vec d_i^\top$.
Let $S_i =  \frac 1m \vec d_i \vec d_i^\top - \frac 1m \E[\vec d_i \vec d_i^\top]$.
Then, $\|  \frac 1m D D^\top  - \E\left[ \frac 1m D D^\top  \right] \|_2 =  \| \sum_{i=1}^m  S_i \|_2$.
Since, $\vec d_i = \vec x_i^1 - \vec x_i^2$ and $\|\vec x_i^j\|\leq M$, we have that for any  $i$, $\| \vec d_i \vec d_i^\top -  \E[\vec d_i  \vec d_i^\top] \| \leq 4M^2$. Then, 
\[   L = \max_i \|S_i\|_2 = \frac 1m \max_i \| \vec d_i \vec d_i^\top - \E[\vec d_i \vec d_i^\top] \|_2 \leq \frac 4m M^2,
\]
and $\| \E[S_i S_i^\top] \leq L^2$. Note that $S_i$ is Hermitian, so, the matrix covariance is 
\[
v(Z) = \max \left\{ \left\| \sum_{i=1}^m \E[S_i S_i^\top] \right\|,  \left\| \sum_{i=1}^m \E[S_i^\top S_i] \right\|   \right\} = \left\| \sum_{i=1}^m \E[S_i S_i^\top] \right\| \leq m L^2.
\]

If $\delta_0 \leq 4M^2$ and $m \in \Omega( \frac{M^4}{\delta_0^2} \log \frac n \delta)$ or $\delta_0 \geq 4M^2$ and $m \in \Omega( \frac{M^2}{\delta_0} \log \frac n \delta)$, then by   Matrix Bernstein inequality (Proposition~\ref{prop:bernstein}), we have
\[ \Pr\left[ \left\| \sum_{i=1}^m  S_i \right\| \geq \frac {\delta_0}{ 2} \right] \leq \delta.\]

\section{Omitted Proof from Section~\ref{sec:denoise} --- Denoising}

\subsection{Proof of Claim~\ref{claim:high-dense}} \label{app:claim:high-density}

Recall that for any $i\in[k]$, with probability $\pg= g(\near)$ a nearly pure weight vector $\vec w$ is generated from $\P$, such that $\| \vec w - \vec e_i\| \leq \near$.
And independently, with probability $p_0$ the point is not noisy. Therefore, there is $p_0 \pg$ density on non-noisy points that are almost purely of class $i$. Note that for such points, $\vec x$, 
\[
\| P\vec x - \vec a_i \|  = \left\| \sum_{j=1}^k w_j \vec a_j - \vec a_i \right\| \leq k ( \near)(\sizeA) \leq  \frac{\epsilon'}{8}.
\]
Since $\| P - \hat P \| \leq \epsilon' / 8M$, we have
\[ \| \vec a_i - \hat P \vec x \| = \| \vec a_i -  P \vec x \|  + \|  P \vec x - \hat P \vec x \|  \leq \frac{\epsilon'}{8} +\frac{\epsilon'}{8} \leq \frac{\epsilon'}{4}
\]
The claim follows immediately. 

\section{Omitted Proof from Section~\ref{sec:phase2} --- Phase 2}

\subsection{Omitted proof from Claim~\ref{claim:extreme-noisy}}
\label{app:CH_claim}
Here, we prove that  $\hvec x \in \convh(\hat S_\parallel\setminus B_{d+\epsilon'}(\hvec a_i))$ then there exists $\vec x\in \convh(\Delta \setminus B_{d}(\hvec a_i))$, such that $\| \vec x - \hvec x\|\leq \epsilon'$.

Let $\vec x = \sum_i \alpha_i \hvec z_i$ be the convex combination of $\hvec z_1, \dots, \hvec z_\ell  \in \hat S_\parallel  \setminus B_{d+\epsilon'}(\hvec a_i)$.
By Claim~\ref{lem:phase-denoise}, there are $\vec z_1, \dots, \vec z_\ell\in \Delta$, such that $\|\vec z_i - \hvec z_i\|\leq \epsilon'$ for all $i\in[k]$. Furthermore, by the proximity of $\vec z_i$ to $\hvec z_i$ we have that $\vec z_i \not\in B_{d}(\hvec a_i)$. Therefore, 
$\vec z_1, \dots, \vec z_\ell\in \Delta \setminus B_{d}(\hvec a_i)$. Then,
$\vec x = \sum_i \alpha_i \vec z_i$ is also within distance $\epsilon'$.

\section{Proof of Theorem~\ref{thm:lower} --- Lower Bound} \label{app:lower-bound}
For ease of exposition assume that $n$ is a multiple of $k$. Furthermore, in this proof we adopt the notion $(\vec x_i, \vec x'_i)$ to represent the two views of the $i^{th}$ sample. For any vector $\vec u\in \R^n$ and $i\in [k]$, we use $(\vec u)_i$ to denote the $i^{th}$ $\frac nk$-dimensional block of $\vec u$, i.e., coordinates $u_{(i-1)\frac nk+1},\dots, u_{i\frac nk}$.

Consider the $\frac n k$-dimensional vector $\vec u_j$, such that $u_{j\ell} = 1$ if $\ell = 2j-1$ or $2j$, and $u_{j\ell} = 0$, otherwise.
And consider $\frac n k$-dimensional vectors $\vec z_j$ and  $\vec z'_j$, such that $z_{ j\ell } = -1$ if $\ell = 2j-1$ and $z_{ j\ell } =1$ otherwise, and $z'_{j \ell} = -1$ if $\ell = 2j$ and $z'_{j \ell} = 1$ otherwise.
Consider a setting where $\vec v_i$ is restricted to the set of  candidate $C_i = \{\vec v^j_i \mid (\vec v^j_i)_i = \vec u_j /\sqrt{2} \text{ and }  (\vec v^j_i)_{i'} = \vec 0 \text{ for } i'\neq i \}$. In other words, the $\ell^{th}$ coordinate of $\vec v^j_i$ is $1/\sqrt{2}$ if $\ell = (i-1)\frac nk+2j-1$ or $(i-1)\frac nk+2j$, else $0$.
Furthermore, consider instances $( \vec x^j_i, \vec x'^j_i)$ such that $(\vec x^j_i)_i = \vec z_j / \sqrt{2}$ and  $(\vec x'^j_i)_i = \vec z'_j/\sqrt{2}$ and for all $i'\neq i$,  $(\vec x^j_i)_{i'} =  (\vec x'^j_i)_{i'} = \vec 0$. In other words,
\begin{align*}
\vec x^j_i &= \frac {1}{\sqrt 2}~(0, \dots, 0,    \ \   1, \dots, 1, \!\!\!\!\!\!\!\!\!\!\!\!\!\!\!\! \overbrace{1 ,  -1 }^{(i-1)\frac nk+2j -1, (i-1)\frac nk + 2j}  \!\!\!\!\!\!\!\!\!\!\!\!\!\!\!\!\!\!, 1, \dots, 1,\ \  0, \dots, 0),\\
\vec x'^j_i  &= \frac {1}{\sqrt 2}~(0, \dots, 0, \ \ 1, \dots, 1,  -1 ,\ \  1\  ,  1, \dots, 1, \ \ 0, \dots, 0),\\
\vec v^j_i  &= \frac {1}{\sqrt 2}~(0, \dots, 0, \   \underbrace{\ 0, \dots, 0, \ \ \,  1 ,\ \  1\  , 0 , \dots,0, \ }_{i^{th}\ block} \  0, \dots, 0).
\end{align*}

First note that, for any $i, i'\in [k]$ and any $j, j'\in [\frac{n}{2k}]$, $\vec v_i^j \cdot \vec x^{j'}_{i'} = \vec v_i^j \cdot \vec x'^{j'}_{i'}$. That is, the two views of all instances are consistent with each other with respect to all candidate vectors.
Furthermore, for any $i$ and $i'$ such that $i\neq i'$, for all $j, j'$, $\vec v_i^j \cdot \vec x^{j'}_{i'} = 0$. Therefore, for any observed sample $(\vec x^j_i, \vec x'^j_i)$, the sample should be purely of type $i$.

For a given $i$, consider all the samples $(\vec x^j_i, \vec x'^j_i)$ that are observed by the algorithm.
Note that $\vec v_i^j \cdot \vec x^j_i = \vec v_i^j \cdot \vec x'^j_i =0$. And for all $j' \neq j$,  $\vec v_i^{j'} \cdot \vec x^j_i = \vec v_i^{j'} \cdot \vec x'^j_i =1$. Therefore,  observing  $(\vec x^j_i, \vec x'^j_i)$ only rules out $\vec v_i^j$ as a candidate, while this sample is consistent  with candidates $\vec v_i^{j'}$ for $j'\neq j$. Therefore, even after observing $\leq \frac {n}{2k} -2$ samples of this types, at least $2$ possible choices for $\vec v_i$ remain valid. Moreover, the distance between any two $\vec v^j_i,\vec v^{j'}_i \in C_i$ is $\sqrt{2}$. Therefore, $\frac {n}{2k} -1$ samples are needed to learn $\vec v_i$ to an accuracy better than $\sqrt{2}/2$.

Note that consistency of the data with $\vec v_{i'}$ is not affected by the samples of type $\vec x_i^j$ that are observed by the algorithms when $i'\neq i$. So, $\Omega(k \frac nk)= \Omega(n)$ samples are required to approximate all $\vec v_i$'s to an accuracy better than $\sqrt{2}/2$.

%

\end{document}